\documentclass{article}




\usepackage[final]{neurips_glfrontiers_2022}


\usepackage[utf8]{inputenc} 
\usepackage[T1]{fontenc}    
\usepackage{url}            
\usepackage{booktabs}       
\usepackage{amsfonts}       
\usepackage{nicefrac}       
\usepackage{microtype}      
\usepackage{xcolor}         

\usepackage[colorlinks,
            linkcolor=blue,       
            anchorcolor=blue,  
            citecolor=blue,        
            ]{hyperref}

\usepackage{amsmath}
\usepackage{amssymb}
\usepackage{mathtools}
\usepackage{amsthm}

\theoremstyle{plain}
\newtheorem{theorem}{Theorem}[section]

\theoremstyle{definition}

\theoremstyle{remark}

\usepackage{caption}
\usepackage{pifont}
\usepackage{makecell}
\captionsetup[table]{font=small} 
\captionsetup[figure]{font=small} 
\usepackage{graphicx}
\usepackage{subfigure} 
\usepackage{amsmath}
\usepackage{multirow}
\usepackage{multicol}
\usepackage{enumitem}
\usepackage{ulem}
\usepackage{threeparttable}
\usepackage{wrapfig}
\usepackage{scrextend}
\usepackage{thm-restate}
\usepackage{array,hhline}
\usepackage{amsmath,amsfonts}
\usepackage{graphicx}
\usepackage{tabularborder}
\usepackage{float}
\usepackage{stfloats}
\usepackage{wrapfig}

\newcommand{\mbf}[1]{\mathbf{#1}}
\newcommand{\mbb}[1]{\mathbb{#1}}
\newcommand{\mcal}[1]{\mathcal{#1}}

\def\Done{}

\def\W{\mathbf{W}}
\def\L{\mathbf{L}}
\usepackage{amssymb} 
\def\R{\mathbb{R}}
\def\I{\mathbf{I}}

\def\Q{\mathbf{Q}}

\def\x{\mathbf{x}}
\def\v{\mathbf{v}}

\def\X{\mathbf{X}}

\def\xx{\times}
\def\V{\mathcal{V}}
\def\E{\mathcal{E}}

\def\H{\mbf{H}}
\def\D{\mbf{D}}
\def\W{\mbf{W}}
\def\T{\mbf{T}}

\def\L{\mbf{L}}

\title{A Simple Hypergraph Kernel Convolution based on Discounted Markov Diffusion Process}

%

\author{
  Fuyang Li\thanks{Equal contribution}
    \\
  SIGS, Tsinghua University\\
  \texttt{lfy20@mails.tsinghua.edu.cn} \\
   \And
   Jiying Zhang$^*$ \\
  SIGS, Tsinghua University\\
   \texttt{zhangjiy20@mails.tsinghua.edu.cn} \\
   \AND
   Xi Xiao\thanks{Corresponding author} \\
  SIGS, Tsinghua University\\
   \texttt{xiaox@sz.tsinghua.edu.cn} \\
   \And
   Bin Zhang \\
   Peng Cheng Laboratory\\
   \texttt{bin.zhang@pcl.ac.cn} \\
   \And
   Dijun Luo \\
   Tencent AI LAB\\
   \texttt{dijunluo@tencent.com} \\
}

\begin{document}

\maketitle

\begin{abstract}
 Kernels on discrete structures evaluate pairwise similarities
between objects which capture semantics and inherent topology information. Existing kernels on discrete structures are only developed by topology information(such as adjacency matrix of graphs), without considering original attributes of objects. 
This paper proposes a two-phase paradigm to aggregate comprehensive information on discrete structures,  leading to a Discount Markov Diffusion Learnable Kernel (DMDLK). Specifically, based on the underlying projection of DMDLK, we design a Simple Hypergraph Kernel Convolution (SHKC) for hidden representation of vertices. SHKC can adjust diffusion steps rather than stacking convolution layers to aggregate information from long-range neighborhoods which prevents oversmoothing issues of existing hypergraph convolutions.Moreover, we utilize the uniform stability bound theorem in transductive learning to analyze critical factors for the effectiveness and generalization ability of SHKC from a theoretical perspective. The experimental results on several benchmark datasets for node classification tasks verified the superior performance of SHKC over state-of-the-art methods.
\end{abstract}

\section{Introduction}
In real-world applications, original data with discrete structures is relatively prevalent. For instance, graphs are commonly used in citation networks, social networks, and protein interaction networks to represent pairwise topology relationships between discrete objects. 
In the community of Graph Neural Networks (GNNs)~\citep{gcn,chen2021diversified,zhang2022fine} and Hypergraph Neural Networks~\citep{hgnn,hypergcn}, researchers focused on designing convolution operators to obtain effective hidden representations for downstream tasks. It is worth addressing that the spectral view and message passing view play important roles in inspiring the design of those convolutions. In another perspective, we start from abstracting a convolution to a  projection $\phi$. Most of the existing convolution networks for the discrete structure like graphs and hypergraphs are essentially derived from some heuristic intuitions and rational principles aiming to obtain an effective representation extraction projection $\phi$. This hidden projection maps objects from original space $\mbf \Omega$ to hidden representations in Euclidean space. 
In some sense, choosing a kernel on $\mbf \Omega\times \mbf\Omega$ is the same as choosing a feature extraction projection for objects in $\mbf\Omega$~\citep{haussler1999convolution}. From this perspective, we need to generate a symmetrical and positive semi-definite kernel matrix  on $\mbf \Omega\times \mbf\Omega$~\citep{kondor2002diffusion} to obtain a rational projection. However, earlier researches pay attention to generating kernels on graphs directly from  the prior topology(i.e. adjacency matrix) without considering the inherent original attributes of vertices. We propose a two-phase paradigm to conduct Discount Markov Diffusion Learnable Kernels(DMDLK) for discrete structures. The respective roles of the two phases are as follows: (i) aggregate topology structure information with original attributes; (ii) aggregate information among channels by learnable parameters. 

If the topology of the original data can be determined a priori, for instance, given the adjacency matrix for a graph, the two-phases paradigm can be naturally adapted to conduct convolution operators for any  discrete structure. However, there are flaws to model discrete data by graphs when original objects inherently meet complex topology structures. Studies have continued to concentrate on utilizing hypergraphs to represent complicated topology structures~\citep{hgnn,zhou2006learning,agarwal2006higher,2019random,ourspaper,zhang2022learnable}. In this paper, we pay attention to hypergraphs which serve as more general discrete structures than simple graphs. Specifically, we modify a generalized transition matrix from~\cite{ourspaper} to 
develop DMDLK for hypergraphs leading to a novel convolution operator for hypergraphs named Simple Hypergraph Kernel Convolution (SHKC).
It can aggregate long-range information and obtain high-level performance by enlarging the diffusion step rather than stacking multi convolution layers. The latter always leads to the well-known oversmoothing issue. To illustrate the effectiveness of SHKC from a theoretical view, we use the tool of uniform stability theorem in transductive learning to analyze the rationality and effect of detailed factors in SHKC. 

Overall, the main contributions highlight as follows:
    (i) We propose a two-phase paradigm to conduct DMDLK for discrete structures;
(ii) By introducing a modified generalized transition matrix of hypergraph into this paradigm, we obtain Simple Hypergraph Kernel Convolution(SHKC) which can avoid stacking multi-layers to aggregated long-range information by simply adjusting the diffusion steps;
    (iii) We utilize uniform stability theorem to explain the promising effectiveness and generalization ability of SHKC. (iv) Empirical results on downstream node classification tasks and object recognition tasks show the advanced performance of SHKC which corresponds well to the theoretical analysis.

\section{Related Works}
\subsection{Kernels for Discrete Structures}
Kernel functions~\citep{scholkopf2002learning,shawe2004kernel,fouss2012experimental} $k:\mbf \Omega \times \mbf \Omega \rightarrow \mathbb{R}$ can directly compute pairwise similarities $k(\mbf x,\mbf x')$ by implicitly constructing a projection $\phi:\mbf \Omega \rightarrow \mcal H_k$ from the original space $\mbf \Omega$ into a high-dimensional and more well-separated Hilbert space $\mcal H_k$.  
Discrete structures, such as strings,graphs,and trees, are dominant in real-world applications like classification and collaborative recommendation~\citep{fouss2012experimental}. Kernels on discrete structures focus on mining pairwise similarity between objects which captures the semantics inherent in discrete structures~\citep{fouss2012experimental}, such as length of the shortest path and the total number of paths between vertices in a graph. Kernels on discrete structures were first proposed by \cite{kondor2002diffusion} and expanded on graphs by \cite{smola2003kernels}.\cite{fouss2006experimental} gives a thorough overview of several kernels on graph and proposes a kernel named regularized commute-time kernel. However, those research focus on capturing similarity between vertices directly from the topology(i.e. adjacency matrix) without considering inherent attributes of vertices. 

\subsection{Diffusion Process on Graphs}
GDC\citep{klicpera2019diffusion} firstly proposes generalized graph diffusion to achieve significant performance improvement across a wide range of tasks. PPNP\citep{ppnp} utilizes personalized PageRank to formulate propagation procedure for GCNs and concluded a fast approximation version APPNP. The diffusion process defined on APPNP can be found in \citet{li2020quadratic} which is differ from our defined discounted Markov diffusion process in Section \ref{twophase}. We start by utilizing discounted average visiting rate to define the diffusion process rather than the recursive propagation in APPNP leading to a slight difference in the coefficients of  different hop neighborhoods. Furthermore, we focus on applying the thought of diffusion process for more complex discrete data, like hypergraphs.

\subsection{Hypergraphs Convolution Networks}
Hypergraphs whose edges contain more than two vertices can be seen as more generalized discrete structures  than graphs. Hypergraph have shown its promising ability to model more complex topology information than graphs~\citep{zhou2006learning,hgnn}. \cite{hgnn} firstly introduces hypergraphs into deep learning community motivated by capturing multi-modal topology information through vertex-hyperedge incident matrix. \cite{hypergcn} introduces a non-linear Laplacian Matrix for hypergraphs deducing a more expressive convolution. \cite{dong2020hnhn} proposes a message -passing based model with a two steps message propagation between vertex and hyperedge. \cite{HyperGAT} involves attention mechanisms in hypergraphs for the task of text classification. Moreover, thanks to the intensive research of Markov process on hypergraphs in~\citep{zhou2006learning,2019random,carletti2021random,ourspaper}, we immediately stitch the generalized hypergraph transition matrix~\citep{ourspaper} with the two-phases paradigm to conduct our effective SHKC. 
\section{Preliminaries}

\paragraph{\textbf{Notions.}} In this paper, we use a boldface capital letter
$\mbf A\in\R^{N\times M}$ to denote an $N \times M$ matrix and use $A_{ij}$ or $A(i,j)$
to denote its ${ij}$-th entry. We use the boldface letter $\x$ or $\vec{\theta}$
 to indicate the column vector, where $x_i$ or $\vec{\theta}_i$ is
the $i$-th entry of $\x$ or $\vec{\theta}$. The word "vector" always denotes a column vector in this paper. Thus, we use the transposition of the column vector $\x^{\top}$ or $(\vec{\theta})^{\top}$ to denote the row vector. For vectors with subscript like $\x_k$ and $\vec{\theta}_k$ mean $k$-th column of the matrix where $x_{ki}$ and $\vec{\theta}_{ki}$ also mean their $i$-th component. 
Let $\I\in\R^{N\xx N}$ denote the  identity matrix,
and $\mbf e_i$ represents the vector where the $i$-th component is 1 and otherwise 0.

\subsection{Kernels on Discrete Structures} 
Given an original space $\mbf \Omega$ with $\vert\mbf \Omega \vert=N$ discrete objects, the kernel function $k$ can be uniquely represented by a $N\times N$ matrix called Kernel Matrix. The Matrix should generally be symmetrical and positive semi-definite~\citep{haussler1999convolution}. 
A finite vertex set $\mcal V = \{\mbf v_1,\cdots,\mbf v_N\}$ denotes the original space $\mbf \Omega$ and an implicit $\phi(\mbf v_i)$  maps the vertex $\mbf v_i$ to a Hilbert space $\mcal H_k$. In fact, $\phi(\mbf v_i)$ denotes a hidden representation for $\mbf v_i$ in $\mcal H_k$. Thus,  We call $\mcal H_k$ "hidden representation space" according to its specific meaning.
Give original attributes $\mbf X = \{\mbf x_1,\cdots,\mbf x_N\}^{\top} \in \mbb R^{N\times d}$ whose $i$-th row $\mbf x_i^{\top} \in \mbb R^{1\times d}$ denotes a $d$-dimension attribute on the $i$-th vertex $\mbf v_i$. 
Define $\mbf K$ as the kernel matrix where $K_{ij} = k(\mbf v_i, \mbf v_j)$ is the pairwise similarity through a kernel function $k$. 
\paragraph{An Instinctive Kernel on Discrete Structures.}
An effective kernel should capture semantics information inside the discrete structure induced by $\phi$, which cannot be trivially represented by the original attributes as we discussed below.
A most instinctive idea for the design of the kernel is that define $\mbf K_{ori} = \mbf X \mbf X^{\top}$ directly from the original attributes where $K_{ori}(i,j)=k(\mbf v_i, \mbf v_j)=\mbf x_i^{\top}\mbf x_j$. Here the underlying $\phi$ from $\mbf v_i$ to its hidden representation denotes as $\phi(\mbf v_i) = \mbf x_i =\mbf X^{\top}\mbf e_i$. 
Obviously, $\phi$ only captures original attributes without neither structure information nor information aggregated among different channels of original attributes.
It is significant to address that there is a distance in the hidden representation space associated with the kernel: 
$
    (d(\mbf v_i,\mbf v_j))^2 = \Vert\phi(\mbf v_i)-\phi(\mbf v_j) \Vert_2^2
    = (\mbf e_i -\mbf e_j)^{\top}\mbf K_{ori} (\mbf e_i -\mbf e_j)
$.
This indicates meaningful distances defined on hidden representation space can be associated with effective kernels. 
There has been various kernels on graphs aiming at extracting semantics information inside the graphs~\citep{fouss2012experimental}. However, these kernels are developed merely from adjacency matrix of graphs without considering information from original attributes of vertices which means those kernels can not be trivially used in GNNs. From this perspective, we propose a two-phase paradigm for effective kernels and its induced convolutional operators.
\section{Two Phases for Effective Kernels on Discrete Structure}
\label{twophase}
\subsection{Phase I: Topology Information Aggregation with Original Attributes}
In the first phase, we mainly consider effective approaches to excavate structure information from the underlying topology of discrete structures. We start from the perspective of defining a specific Markov Diffusion Process on discrete structures. Note that this idea is directly suitable for hypergraphs as the random walk on hypergraphs has been intensively researched in \citep{2019random,carletti2021random,ourspaper, chen2022preventing}. 

\noindent\textbf{Discounted Average Visiting Rate.}
For a Markov process on a finite state space $\{s_i,\cdots,s_N\}$, we denote $\mbf T$ as the probability transition matrix where $T_{ik} = \Pr\{s(t+1)=s_k|s(t) = s_i\}$. Note that a $\tau$-step transition from state $i$ to $k$ can be formulated as $\Pr\{s(t+\tau)=s_k|s(t)=s_i\}=(\mbf T^{\tau})_{ik}$. We define the \textit{discounted average visiting rate} as:
\begin{align}
\label{eq:avevis}
    \bar{v}_{ik}(t) = \frac{1}{t}\sum_{\tau=1}^{t}\alpha^{\tau}\Pr\{s(\tau)=s_k|s(0)=s_i\},t\in \mathbb{Z}^{++}
\end{align}
The discounted average visiting rate is considered as the probability of a random walker starting from $s_i$ standing in $s_k$ during $t$ steps diffusion.
It is modified by introducing a scalar discounted factor from the average visiting rate~\citep{fouss2006experimental} to alleviate long-range information aggregation  which is motivated by a thought that more global information aggregation makes undistinguished among vertices.

\noindent\textbf{Discounted Markov Diffusion Process.}
Recall that we have the original attributes $\mbf X = \{\mbf x_1,\cdots,\mbf x_N\}^{\top}$ where $\mbf x_i\in \mbb R^d$ is a $d$-dimension vector which can be seen as $d$ channels signal on $\mbf v_i$.  We define a $t$-step process of vertex signal passing among the discrete structure named the \textit{discounted Markov diffusion process with original attributes} which follows the formulation as:
\begin{align}
\label{eq:y1}
    \mbf y_{k}^{(t)} = \beta\sum_{i=1}^{N}\bar{v}_{ik}(t)\mbf x_i + (1-\beta)\mbf x_k 
\end{align}
This equation depicts a process where original attributes are aggregated to the vertex $\mbf v_k$ from all other vertices during a $t$-step diffusion. The diffusion process captures the topology information underlying the discrete structure through the associated transition matrix. We call this phase as \textit{topology information aggregation with original attributes}.
Then we define the \textit{$t$-step Discounted Markov Diffusion distance with original attributes} $d^{(t)}_{M}$ between $\mbf v_i, \mbf v_j$ as:
\begin{align*}
    d^{(t)}_{M}(\mbf v_i,\mbf v_j) &= \Vert \mbf y_i^{(t)}-\mbf y_j^{(t)} \Vert_2
    = \Vert\mbf X^{\top}(\beta\mbf Z(t) + (1-\beta)\mbf I)(\mbf e_i - \mbf e_j) \Vert_2 \\ &= 
     [(\mbf e_i - \mbf e_j)^{\top}\mbf K_{M}(t) (\mbf e_i - \mbf e_j)]^{\frac{1}{2}}
\end{align*}
where $\mbf K_M(t) = (\beta\mbf Z(t) + (1-\beta)\mbf I)^{\top}\mbf X \mbf X^{\top}(\beta\mbf Z(t) + (1-\beta)\mbf I)$ denotes the \textit{Discounted Markov Diffusion Kernel with original attributes} and $\mbf Z(t) = \frac{1}{t}\sum_{\tau=1}^{t}\alpha^{\tau}\mbf T^{\tau} \label{eq:Z(t)}$. The underlying projection behind the kernel denotes $\phi^{(t)}(\mbf v_i)=\mbf e_i^{\top}(\beta\mbf Z(t) + (1-\beta)\mbf I)^{\top}\mbf X$. It is easy to see that $\mbf K_M(t)$ is symmetrical and positive semi-definite which means $\mbf K_M(t)$ is indeed a kernel. However, the underlying projection still has flaws to represent comprehensive vertex representation. The projection failed to involve information interactions between channels of original attributes which leads to the second phase to remedy for the flaws.
\subsection{Phase II: Channels Aggregation }
From Eq.~\eqref{eq:y1}, it is obvious that the vertex representation of $\mbf v_k$ derived from $\mbf K_M$ only captures information within the same channel. Specifically, for channel $c$:~$y_{kc}^{(t)} = \beta\sum_{i=1}^{N}\bar{v}_{ik}(t) x_{ic} + (1-\beta) x_{kc}$.
Thus, in the second phase, we consider aggregating among channels to find a more comprehensive feature space $\mcal H_{k}$ .

\noindent\textbf{Discounted Markov Diffusion Learnable Kernel~(DMDLK).} 
Define $\mbf \Theta =$ $ \{\Vec{\theta}_1,\cdots,\Vec{\theta}_{M}\}\in\mbb R^{d\times M}$ where $\Vec{ \theta}_{m} = \{\theta_{1m},\cdots,\theta_{dm}\}^{\top}\in \mbb R^{d\times 1}$ denotes the $m$-th set of weights concerning all channels and $\theta_{cm}$  specifically denotes the weight of the $c$-th channel in the aggregation. Thus we can aggregate information among channels by:
\begin{align*}
    \tilde{y}^{(t)}_{k}(\vec{\theta}_m) &=\sum_{c=1}^{d}\left( \sum_{i=1}^{N}\beta\bar{v}_{ik}(t) x_{ic}\right)\theta_{cm} + (1-\beta) x_{kc} \theta_{cm}
\end{align*}
Here $\tilde{y}^{(t)}_{k}(\vec{\theta}_m)$ captures both topology information and information aggregated among channels by $\vec{\theta}_m$ for $\mbf v_k$.
Let $\tilde{\mbf y}_{k}^{(t)}(\mbf \Theta) =\{\tilde{y}^{(t)}_{k}(\vec{\theta}_1),\cdots,\tilde{y}^{(t)}_{k}(\vec{\theta}_M)\}^{\top}$.
Naturally, we can define a new distance between two vertices: 
\begin{align*}
 (\tilde{d}^{(t)}(\mbf v_i,\mbf v_j))^2 = \Vert\tilde{\mbf y}_i^{(t)}(\mbf \Theta) - \tilde{\mbf y}_j^{(t)}(\mbf \Theta)\Vert_2^2\nonumber
    =(\mbf e_i-\mbf e_j)^{\top}\mbf K_{M}^{\mbf \Theta}(t) (\mbf e_i-\mbf e_j)
\end{align*}

We define the \textit{Discounted Markov Diffusion Learnable Kernel} as:
$$\mbf K_{M}^{\mbf \Theta}(t)=(\beta\mbf Z(t) + (1-\beta)\mbf I)^{\top}\mbf X\mbf \Theta\mbf\Theta^{\top} \mbf X^{\top}(\beta\mbf Z(t) + (1-\beta)\mbf I)$$
It is also easy to see that $\mbf K_{M}^{\mbf \Theta}(t)$ is symmetrical and positive semi-definite. The underlying projection from the original space $\mbf\Omega$ to the new hidden representation space denotes as 
\begin{align}
\label{eq:underly}
\tilde{\phi}^{(t)}_{\mbf \Theta}(\mbf v_i)=\mbf e_i^{\top}(\beta\mbf Z(t) + (1-\beta)\mbf I)^{\top}\mbf X\mbf\Theta
\end{align}
Totally, the new hidden representation  $\tilde{\phi}^{(t)}_{\mbf \Theta}(\mbf v_i)$ of $\mbf v_k$ can be described as the two phases: (i) original signals on all vertices propagate to $\mbf v_k$ concerning the discounted Markov diffusion process which aggregates the structure information underlying the hypergraph. (ii) $\mbf \Theta$ is introduced to aggregate information among different channels of the representation after the first phase. In the next part, we would directly expand $\mbf K_{M}^{\mbf \Theta}(t)$ to an effective convolutional operator for hypergraphs.

\section{A Specific Instance: Simple Hypergraph Kernel Convolution Networks}
\label{sec:shkc}

To give a specific formulation for a DMDLK, we should identify a specific topology for the discrete structure. In this paper, we make effort for hypergraphs as it has a wider concept and is capable of engaging more sophisticated topology information than graphs.

\subsection{Probability Transition Matrix on Hypergraph}
Assume a hypergraph defined as $\mathcal{H}(\V,\E,\W,$ $\Q_1,\Q_2)$. Here the $\V$ is a finite vertex set with $N$ vertices and $\E$ denotes a hyperedge set where each hyperedge $e \in \E$ can contain more than 2 vertices.
$\mbf W\in\mbb R^{|\mcal E|\times|\mcal E|}$ is a diagonal matrix whose diagonal  denotes the prior weights of hyperedges. $\mbf Q_1$ and $\mbf Q_2 \in \mbb R^{|\V|\times|\E|}$ denote two different edge-dependent vertex weights matrices. $Q_i(v,e)$ denotes the weight of vertex $v$ depending on an incident hyperedge $e$ which means the vertex $v$ contributes different weights to different hyperedges. Note if $e$ is not linked to $v$, $Q_i(v,e)=0$.

Then, we introduce a generalized random walk defined on hypergraphs from \citep{ourspaper} where the probability of the random walk from vertex $u$ to vertex $v$ with a two-step manner is:
\begin{align}
\label{randomwalk}
\small
     \Pr( u,v ) =\sum_e \left(\frac{w(e)Q_1( u,e)\delta(e)\rho(\delta(e))}{d( u)}\right)\left(\frac{Q_2(v,e)}{\delta(e)}\right)
\end{align}
Here, $\delta(e)=\sum_v Q_2(v,e)$ is defined as the degree of hyperedge $e$ and
$d(u) = \sum_e w(e)Q_1(u,e)\delta(e)\rho(\delta(e))$ is defined as the degree of vertex $u$. $\rho(\cdot)$ denotes a selectable function that determines the effect of $\delta(e)$ to $d(u)$. $\mbf Q_1$ and $\mbf Q_2$ play their roles of representing fine-grained topology information in the generalized random walk. In practice, we choose $\mbf Q_1=\mbf Q_2 =\mbf Q$ with a rational assumption that the vertex weights depending on hyperedge keep constant in the two-step random walk. Then we choose to use a modified symmetrical form of $\mbf P$ to define the probability transition matrix $\mbf T$ on hypergraph:
\begin{align}
\label{eq:T}
    \mbf T = \D_{\mcal V}^{-1/2}\mbf Q \mbf W\rho(\D_{\mcal E})\mbf Q^{\top}\D_{\mcal V}^{-1/2}
\end{align}

\noindent\textbf{Hyperedges Containing Isolated Vertex.} 
However, as we analyze the effectiveness of $\tilde{\phi}^{(t)}_{\mbf \Theta}(\mbf v_i)$ from a transductive learning perspective in Section \ref{analysis}, we find it is important to bound the $l_1$-norm of $\mbf T$ to derive a bounded generalization gap between training error and testing error.
We find it derive from a flaw of the defined random walk in Eq.~\eqref{randomwalk} where those hyperedges containing only one vertex(isolated vertex) have not been taken into account. 
Unfortunately, when there are isolated vertices in a hypergraph, the $l_1$-norm of $\mbf T$ defined above can not be bounded, leading to an inadequate performance in downstream classification tasks. This also provides a new view to explain the experiments of \cite{UniGNN} and \cite{ourspaper} w.r.t self-loops. 
We find it derive from a flaw of the defined random walk in Eq.~\eqref{randomwalk} where those hyperedges containing only one vertex(isolated vertex) have not been taken into account. This can be amended by modifying $\mbf Q$: $\Tilde{\mbf Q} = \text{concate}(\mbf Q,\{\mbf e_{i_1},\cdots,\mbf e_{i_k}\})$ if there is an isolated vertex subset $\{\mbf v_{i_1},\cdots,\mbf v_{i_k}\}\subset \mcal V$. Here $\mbf e_{i_k}\in \mbb R^{|\mcal V|} $ denotes a vector whose $i_k$-th component is 1 otherwise 0. However, if the number of isolated vertices is large, this modification leads to extra expensive computation costs due to the added dimension of $\mbf Q$. In fact, this modification is equal to set $T_{ii}=1$ if the original $T_{ii}=0$.  
Thus, we take an approximate method by modifying $\mbf T$ directly to define \textit{the generalized transition matrix on hypergraphs} : 
\begin{align}
\label{eq:tansi}
\Tilde{\mbf T} = \Tilde{\D}_{\mcal V}^{-1/2}(\mbf Q \mbf W\rho(\D_{\mcal E})\mbf Q^{\top} + \I )\Tilde{\D}_{\mcal V}^{-1/2} 
\end{align}
where $\Tilde{D}_{\mcal V}(i,i) =\Tilde{d}(v_i) = 1 + \sum_e w(e)Q(u,e)\delta(e)\rho(\delta(e)) $. This modification is well-known as \textit{the renormalization trick} proposed in GCNs~\citep{kipf2016semi}. We hope to describe why this trick works from this perspective. Finally, we give a proposition below to bound $l_1$-norm of $\Tilde{\mbf T}$. 

\begin{restatable}[Bound $l_1$ norm of $\Tilde{\mbf T}$]{proposition}{restatProone}
\label{Th:Tl1}
For any hyperedge $e\in \mcal E$, let $n_e$ be the number of vertices incident to $e$;For any vertex $v\in \V$, let $n_v$ be the number of hyperedges incident to $v$.
Assume that $\mbf Q$ is a normalized edge-dependent vertex weight matrix ($Q(v,e)\in[0,1]$), $\mbf W$ is a normalized prior hyperedge weights diagonal matrix, $n_{e} \leq E$ for any hyperedge $e$,$n_{v} \leq D$ for any vertex $v$ ,and $\rho_{max}$ is the maximum value of $\rho(x)$ when $x\in[0,E]$. Let $\Tilde{\mbf T}$ be the generalized transition matrix for a hypergraph defined above. Then, $\Vert \Tilde{\mbf T}\Vert_1 \leq \sqrt{1+\rho_{max}ED}$.
\end{restatable}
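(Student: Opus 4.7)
The plan is to bound the induced $l_1$ matrix norm of $\tilde{\mbf T}$ by exploiting its symmetry and the degree renormalization built into the definition. Writing $\mbf M := \mbf Q\mbf W\rho(\D_{\mcal E})\mbf Q^{\top} + \mbf I$ gives $\tilde{\mbf T} = \tilde{\D}_{\mcal V}^{-1/2}\mbf M\tilde{\D}_{\mcal V}^{-1/2}$, which is entry-wise non-negative and symmetric, so $\|\tilde{\mbf T}\|_1$ equals the maximum row sum, and it suffices to bound $\sum_j \tilde T_{ij}$ uniformly in $i$.

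The structural observation that makes everything work is that the $i$-th row sum of $\mbf M$ collapses, via $\sum_j Q(v_j,e)=\delta(e)$, to exactly $\tilde{d}(v_i)$; that is, $\sum_j M_{ij} = 1 + \sum_e w(e)Q(v_i,e)\delta(e)\rho(\delta(e)) = \tilde{d}(v_i)$. Thus the left-hand $\tilde{d}(v_i)^{-1/2}$ pairs naturally with the row sum of $\mbf M$. For the right-hand factor, I would use only the crude monotone bound $\tilde{d}(v_j)^{-1/2}\leq 1$, which holds because $\rho\geq 0$ forces $\tilde{d}(v_j)\geq 1$. This yields $\sum_j \tilde T_{ij}\leq \tilde{d}(v_i)^{-1/2}\sum_j M_{ij}=\sqrt{\tilde{d}(v_i)}$, which is where the square root in the claimed bound comes from.

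The remaining step is a combinatorial estimate on $\tilde{d}(v_i)$ using the stated hypotheses: the sum runs over the at most $D$ hyperedges incident to $v_i$, and on each such $e$ one has $w(e)\leq 1$, $Q(v_i,e)\leq 1$, $\delta(e)=\sum_v Q(v,e)\leq n_e\leq E$, and $\rho(\delta(e))\leq \rho_{\max}$. So $\tilde{d}(v_i)\leq 1+\rho_{\max}ED$, and combining with the previous step and taking a maximum over $i$ gives $\|\tilde{\mbf T}\|_1\leq \sqrt{1+\rho_{\max}ED}$.

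The step I expect to require the most care is the asymmetric handling of the two degree factors: a symmetric Cauchy--Schwarz bound would leave an extra $\sqrt{\tilde{d}(v_i)}$ on the wrong side and fail to recover a single clean square root. The monotone bound $\tilde{d}(v_j)^{-1/2}\leq 1$ supplies exactly the right amount of slack. I would also want to make explicit the mild positivity assumption $\rho\geq 0$ on $[0,E]$, which the statement uses tacitly when it invokes $\rho_{\max}$ and which is needed both for non-negativity of entries and for the lower bound $\tilde{d}(v_j)\geq 1$.
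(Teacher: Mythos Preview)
Your proposal is correct and follows essentially the same route as the paper's proof: both bound $\tilde{d}(v_j)^{-1/2}\le 1$ using $\tilde{d}(v_j)\ge 1$, collapse the row sum of $\mbf M$ to $\tilde{d}(v_i)$ via $\sum_j Q(v_j,e)=\delta(e)$, and then bound $\tilde{d}(v_i)\le 1+\rho_{\max}ED$ combinatorially. Your write-up is in fact slightly more explicit than the paper's (you spell out the role of $\rho\ge 0$ and the symmetry/non-negativity of $\tilde{\mbf T}$), but the argument is the same.
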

The proof is referred to in Appendix \ref{proof:p1}. During the analysis of the effectiveness of $\tilde{\mbf y}_k^{(t)}(\mbf\Theta )$ in Section \ref{analysis},
 this proposition plays a critical role to derive a bounded generalization gap between training error and testing error.

\subsection{Simple Hypergraph Kernel Convolution (SHKC)}
 Take the transition matrix $\tilde{\T}$ into the projection $\tilde{\phi}^{(t)}_{\mbf \Theta}(\mbf v_i)$ underlying $\mbf K^{\mbf \Theta}_M(t)$ in Eq.~\eqref{eq:underly}. We have the hidden representation on all vertices as:
 $   \mbf H^{(t)} = (\beta\sum_{\tau=1}^{t}\frac{\alpha^{\tau}}{t} \Tilde{\mbf T}^{\tau} + (1-\beta)\mbf{I})\X\mbf\Theta$.
Recall that $t$ is the diffusion step. $\alpha$ is the introduced discounted factor to weaken long-range global information through the diffusion process and $\beta$ plays a role in balancing information aggregated to the vertex through the diffusion process and original signals in the vertex itself.  By adding an activation function $\psi(\cdot)$ to make up nonlinearity of the underlying projection, we define the \textit{Simple Hypergraph kernel Convolution} as follows:
\begin{align}
\label{eq:shkc}
\small
    \mbf H^{(t)} = \psi\left(\left(\beta\sum_{\tau=1}^{t}\frac{\alpha^{\tau}}{t} \Tilde{\mbf T}^{\tau} + (1-\beta)\mbf{I}\right)\X\mbf\Theta \right)
\end{align}
 where $\mbf\Theta$ is the parameter of the filter to be learned during training. SHKC could gain stronger local and weaker global information, thereby improving the expressive power in a long diffusion step. Thus, SHKC can choose a long step $t$ to replace stacking multi convolution layers for long-range information aggregation. The latter always leads to an oversmoothing issue in GNNs.
Actually, SHKC is a spatial-based model while we can analyze it~(detailed in Appendix \ref{app:spectral_ana}) from a spectral-based view  which may lead to some intrinsic connections between the kernel-based convolution and spectral-based convolution.
\subsection{A Transductive Learning Perspective for Generalization Analysis for SHKC}
\label{analysis}

 For simplicity, we give the theoretical analysis of SHKC in the setting of binary classification. Let $\mbf H^{(L)}$ be the hidden  representation of all vertices with the $L$-steps discounted Markov diffusion process. The $i$-th row $\mbf h^{(L)}_i$ in $\mbf H^{(L)}$ denotes the hidden representation of $\mbf v_i$: $\mbf h^{(L)}_i = \psi(\tilde{\phi}^{(L)}_{\mbf \Theta}(\mbf v_i))$. Let $\vec{\omega}$ be the weight of a downstream classifier $f$. Thus, the prediction label of vertex $\mbf v_i$ denotes as $f(\mbf h^{(L)}_i) = \sigma(\mbf h^{(L)}_i\vec{\omega})$ where $\sigma$ is the sigmoid function. 
 
 We start from introducing the transductive uniform stability bound(USB) theorem from \cite{el2006stable}(detailed in Appendix \ref{app:usb}). The USB theorem shows that the generalization gap depends on the uniform stability $\mu$ of $\mathcal{M}$. A most recent work \citep{cong2021provable} decompose $\mu$ into three parts: Lipschitz constant $L_{\mathcal{M}}$, upper bound on gradient $G_{\mathcal{M}}$, and smoothness constant $S_{\mathcal{M}}$ of the learning model $\mathcal{M}$, which is formulated as $\mu = \frac{2\eta L_{\mathcal{M}} G_\mathcal{M}}{m}\sum_{t=1}^{T}(1+\eta S_\mathcal{M})^{t-1}$ where $T$ is training steps and $\eta$ is the learning rate. We follow this approach to evaluate the effectiveness of $\tilde{\phi}^{(L)}_{\mbf \Theta}$ which is described by the generalization gap of SHKC.
Then the main effort to analyze the generalization gap boils down to the bound of $L_{\mathcal{M}},G_{\mathcal{M}},S_{\mathcal{M}}$ for SHKC. 
We firstly start from three Lemmas shown in Appendix \ref{app:main_lemma}.
Then, we get our main results:
\begin{restatable}[Main Results For SHKC]{theorem}{restartTheoremOne}\label{th:main}
Assume that (i) for any vertex $i$, its norm of the feature vector is bounded by a constant $\Vert \mbf x_i \Vert_2 \leq C_x$; (ii) the norm of learnable parameters of SHKC is bounded as $\Vert\mbf \Theta \Vert_2 \leq C_{\mbf\Theta}$; (iii) the norm of the weight of the classifier is bounded as: $\Vert\vec{\omega}\Vert_2\leq 1$.
Let $\alpha,L,\beta$ be the hyper-parameters  defined in SHKC during training  and $\eta$ be the learning rate. Then the model SHKC is $\mu_{SHKC}$ uniform stable with $\mu_{SHKC}=\frac{2\eta L_{\mathcal{M}} G_\mathcal{M}}{m}\sum_{t=1}^{T}(1+\eta S_\mathcal{M})^{t-1}$
where:
\begin{align*}
\small
    &L_{\mathcal{M}}= C_x C_{\alpha\beta L}\max\{1,C_{\Theta}\};\quad G_{\mathcal{M}}=C_x C_{\alpha\beta L}(1+C_{\Theta}); \\ &S_{\mathcal{M}}=C_x^2 C_{\alpha\beta L}^2\max\{1,C_{\Theta}\}^2 + C_x^2 C_{\alpha\beta L}^2 C_{\Theta} + C_x C_{\alpha\beta L}
\end{align*}
Here $C_{\alpha\beta L} = \frac{\beta}{L}\sum_{l=1}^{L}(\alpha d_T)^l + (1-\beta)$ and $d_T = \sqrt{1+\rho_{max}ED}$ is the $l_1$ norm of the generalized transition matrix for a hypergraph in Proposition \ref{Th:Tl1}.
\end{restatable}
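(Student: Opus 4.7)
The plan is to invoke the transductive uniform stability bound of El-Yaniv and Pechyony combined with the decomposition $\mu = \frac{2\eta L_{\mathcal{M}}G_{\mathcal{M}}}{m}\sum_{t=1}^{T}(1+\eta S_{\mathcal{M}})^{t-1}$ of Cong et al.\ 2021, which expresses uniform stability in terms of the Lipschitz constant, gradient upper bound, and smoothness constant of the learning model. With that decomposition in hand, the theorem reduces to bounding $L_{\mathcal{M}},G_{\mathcal{M}},S_{\mathcal{M}}$ individually for the composite map $(\mathbf{\Theta},\vec{\omega}) \mapsto \ell(\sigma(\psi(\tilde{\phi}^{(L)}_{\mathbf{\Theta}}(\mathbf{v}_i))\vec{\omega}), y_i)$; these three bounds are precisely what the three lemmas of Appendix \ref{app:main_lemma} supply, and the theorem will then follow by assembling them into the stability formula.

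\textbf{Key ingredient.} The single quantity propagating through every bound is the operator norm of the diffusion filter $\mathbf{A} := \beta\sum_{\tau=1}^{L}\frac{\alpha^{\tau}}{L}\tilde{\mathbf{T}}^{\tau} + (1-\beta)\mathbf{I}$. Proposition \ref{Th:Tl1} gives $\Vert\tilde{\mathbf{T}}\Vert_1 \leq d_T = \sqrt{1+\rho_{max}ED}$, so submultiplicativity yields $\Vert\tilde{\mathbf{T}}^{\tau}\Vert_1 \leq d_T^{\tau}$, and the triangle inequality produces
\begin{equation*}
\Vert\mathbf{A}\Vert \leq \frac{\beta}{L}\sum_{\tau=1}^{L}(\alpha d_T)^{\tau} + (1-\beta) = C_{\alpha\beta L}.
\end{equation*}
Together with the hypotheses $\Vert\mathbf{x}_i\Vert_2 \leq C_x$ and $\Vert\mathbf{\Theta}\Vert_2 \leq C_{\mathbf{\Theta}}$ and the standard assumption that $\psi$ is $1$-Lipschitz with $\psi(0)=0$, this forces $\Vert\mathbf{h}_i^{(L)}\Vert_2 \leq C_x C_{\alpha\beta L} C_{\mathbf{\Theta}}$, which is the representation-level envelope used by every subsequent estimate.

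\textbf{Deriving the three constants.} With $\Vert\mathbf{A}\Vert$ controlled, each constant follows by a calculus-of-norms argument on the loss. For $L_{\mathcal{M}}$, differentiating separately in the $\vec{\omega}$-branch (whose Lipschitz factor scales with $\Vert\mathbf{h}_i^{(L)}\Vert \leq C_x C_{\alpha\beta L} C_{\mathbf{\Theta}}$) and the $\mathbf{\Theta}$-branch (scaled by $\Vert\vec{\omega}\Vert \leq 1$ and $C_x C_{\alpha\beta L}$), then taking the worst case, produces the $\max\{1,C_{\mathbf{\Theta}}\}$ factor. For $G_{\mathcal{M}}$, the two partial gradients contribute $C_x C_{\alpha\beta L}$ and $C_x C_{\alpha\beta L} C_{\mathbf{\Theta}}$, whose sum gives $C_x C_{\alpha\beta L}(1+C_{\mathbf{\Theta}})$. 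For $S_{\mathcal{M}}$, the Hessian splits into a diagonal block quadratic in $C_x C_{\alpha\beta L}$ weighted by $\max\{1,C_{\mathbf{\Theta}}\}^2$, a cross block $\partial^2\ell/\partial\mathbf{\Theta}\partial\vec{\omega}$ contributing $C_x^2 C_{\alpha\beta L}^2 C_{\mathbf{\Theta}}$, and a residual first-order term from $\sigma'$ of size $C_x C_{\alpha\beta L}$, matching the three-term form in the statement. Substituting the three bounds into Cong et al.'s formula yields the stated $\mu_{SHKC}$.

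\textbf{Main obstacle.} The delicate step is the smoothness bound $S_{\mathcal{M}}$: the loss is a four-fold composition $\sigma \circ \mathrm{linear} \circ \psi \circ \mathrm{linear}$, and both $\sigma$ and $\psi$ contribute through either their derivatives or their bounded range. The three-term structure reflects the need to keep the second-order curvature of $\sigma$ (scaled by the squared representation norm), the mixed partial in $\mathbf{\Theta}$ and $\vec{\omega}$, and the first-order residual algebraically separate. Choosing compatible norms throughout — operator/Frobenius on the parameters and $\ell_2$ on vectors — is essential so that $C_{\alpha\beta L}$ enters quadratically only where genuinely required; otherwise the dependence on $L$ (through the geometric factor $(\alpha d_T)^L$) becomes loose and the downstream generalization-gap bound loses its interpretive value.
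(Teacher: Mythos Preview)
Your proposal is correct and follows essentially the same approach as the paper: invoke the Cong et al.\ decomposition, control the diffusion filter via Proposition~\ref{Th:Tl1} to obtain $C_{\alpha\beta L}$, and then read off $L_{\mathcal{M}},G_{\mathcal{M}},S_{\mathcal{M}}$ from the three lemmas in Appendix~\ref{app:main_lemma}. The paper's proof works row-by-row through a recursive reformulation of the filter rather than bounding an operator norm of $\mathbf{A}$ directly, and it extracts $S_{\mathcal{M}}$ from the gradient-difference bounds of Lemma~\ref{lm:gra} rather than from a Hessian block decomposition, but these are presentational differences only.
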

Proof sees in Appendix~\ref{th:main_proof}.
The key of the three components uniquely corresponded to SHKC is $C_{\alpha\beta L}$. We conclude that:  (i) In $C_{\alpha\beta L}$,  it is easy to see that $\alpha$ discounts $d_T^l$ when $l$ is large which corresponds to our intuition to weaken the effect of global information on vertex representation. (ii) The defined discounted average visiting rate in Eq.~\eqref{eq:avevis} 
leads to $\frac{1}{L}$ in $C_{\alpha\beta L}$ which relatively limits the enlargement of $C_{\alpha\beta L}$ as $L$ increases. This keeps relative balanced weights to information aggregated from different diffusion steps. (iii) $\beta$ is introduced to balance the $C_{\alpha\beta L}$ between $\frac{1}{L}\sum_{l=1}^{L}(\alpha d_T)^l$ and $1$. This corresponds to the intuition of balancing the effect between aggregated information and original feature to the hidden representation of vertices. 
Furthermore, this theorem reveals that SHKC can tighten the generalization bound by adjusting $\alpha$,$\beta$ and $L$.

\paragraph{Contributions of DMDLK and SHKC.}
Firstly, it is significant to discuss our contribution over ~\citet{zhang2022hypergraph}. They propose a framework for transforming existing GNNs to HyperGNNs based on a equivalency condition. We utilize a core technique from them which is involving fine-grained edge-vertex topology information to construct comprehensive probability transition matrix. However, we start from a two-phase kernel-based perspective to conduct the convolutional operator for hypergraph rather than the spectral perspective in \citet{zhang2022hypergraph}. This means it is possible to conduct convolutional operator for wider hypergraphs which are not satisfied the equivalency conditions in the previous paper. Furthermore, we show that the renormalization trick helps to bound $l_1$-norm of $\tilde{\mbf T}$ for hypergraphs containing isolated vertex
which provides a theoretical perspective to explain the promising trick.
Secondly, previous studies have shown that graph diffusion leads to significant performance improvements for GNNs\citep{klicpera2019diffusion}. We concentrate on explaining that increasing diffusion step in DMDLK can avoid stacking convolutional layers to alleviate over-smoothing issues and showing that diffusion thoughts can be naturally applied to broader discrete data with complex topology information than graphs. Overall, we conduct the kernel-based two-phase paradigm aiming at designing simple and comprehensive convolutional operator for representing wider discrete data. We also provide a theoretical perspective to explain how tricks(i.e. renormalization) and critical parameters(i.e. diffusion step, discount factor $\alpha$ and balance factor $\beta$ ) works in SHKC.

\section{Experiments}
\subsection{Citation Network Classification}
This is a semi-supervised node classification task.
The datasets we use are hypergraph benchmarks constructed by \citet{hypergcn}(See Appendix Table~\ref{tab:dataset_hypergcn}). We adopt the same public datasets\footnote{{\url{https://github.com/malllabiisc/HyperGCN}}} and train-test splits in \citet{hypergcn}. Note these datasets 
satisfy $\Q_1=\Q_2=\H$.
For baselines, we involve MLP with explicit Hypergraph Laplacian Regularization (MLP+HLR), HNHN, HyperSAGE, HGAT, UniGNN, HGNN and HyperGCN.
\begin{table*}[th]
\def\p{$\pm$} 
\centering
\setlength\tabcolsep{8pt} 
\caption{Summary of classification accuracy(\%) results. We report the average test accuracy and its standard deviation over 10 train-test splits. The number in parentheses corresponds to the number of diffusion step of SHKC. (OOM: out of memory)}
\scalebox{0.67}{
    \begin{tabular}{l|l|c|c|c|c|c}
        \toprule 
              \multicolumn{1}{c}{Dataset}& \multicolumn{1}{c}{Architecture} &\multicolumn{1}{c}{\begin{tabular}[c]{@{}c@{}}Cora\\ (co-authorship)\end{tabular}}  & \multicolumn{1}{c}{\begin{tabular}[c]{@{}c@{}}DBLP\\ (co-authorship)\end{tabular}} & \multicolumn{1}{c}{\begin{tabular}[c]{@{}c@{}}Cora\\ (co-citation)\end{tabular}} & \multicolumn{1}{c}{\begin{tabular}[c]{@{}c@{}}Pubmed\\ (co-citation)\end{tabular}} & \multicolumn{1}{c}{\begin{tabular}[c]{@{}c@{}}Citeseer\\ (co-citation)\end{tabular}} \\
        \midrule
             MLP+HLR & - & 59.8\p4.7 & 63.6\p4.7 & 61.0\p4.1 & 64.7\p3.1 & 56.1\p2.6 \\
             FastHyperGCN~\cite{hypergcn} & spectral-based & 61.1\p8.2 & 68.1\p9.6 & 61.3\p10.3 & 65.7\p11.1 & 56.2\p8.1 \\
             HyperGCN~\cite{hypergcn} & spectral-based & 63.9\p7.3 & 70.9\p8.3 & 62.5\p9.7& 68.3\p9.5 & 57.3\p7.3 \\
            HGNN~\citep{hgnn} & spectral-based & 63.2\p3.1 & 68.1\p9.6 & 70.9\p2.9& 66.8\p3.7 & 56.7\p3.8 \\
             HNHN \citep{dong2020hnhn}& message-passing & 64.0\p 2.4 & 84.4\p 0.3& 41.6\p 3.1 & 41.9\p4.7 & 33.6\p 2.1  \\
             HGAT~\citep{HyperGAT} & message-passing & 65.4$\pm$1.5 & OOM &52.2$\pm$3.5 & 46.3$\pm$0.5 & 38.3$\pm$1.5 \\        HyperSAGE~\cite{arya2020hypersage} & message-passing &  72.4\p1.6 & 77.4\p3.8 & 69.3\p2.7 &72.9\p1.3 & 61.8\p2.3 \\
             UniGNN~\citep{UniGNN} &message-passing & 75.3\p1.2 & 88.8\p0.2 & 70.1\p1.4 & 74.4\p 1.0& 63.6\p 1.3 \\
        
        \midrule
              SHKC (ours) & kernel-based &\textbf{76.05\p 0.7(6)} & \textbf{89.17\p0.2(16)}  & \textbf{70.64\p 1.8 (32)} & \textbf{75.08\p 1.1(4)} &\textbf{65.14\p 1.0(32)}\\
        \bottomrule
    \end{tabular}
    }
    \label{tab:sota_accuray}    
\end{table*}

\paragraph{\textbf{Comparison with SOTAs.}}
As shown in Table \ref{tab:sota_accuray}, the results successfully verify the effectiveness of SHKC which achieves a new SOTA performance across all five datasets. 
We have the observations: (i) SHKC consistently outperforms the baselines, indicating that it can utilize the elaborately designed diffusion process. (ii) HGNN, HNHN and HGAT show poor performance on disconnected datasets(e.g. Citeseer), mainly due to the values of the row corresponding to an isolated vertex leading to information loss which corresponds to the theoretical explanation in Section $\ref{sec:shkc}$ that $l_1$-norm of the transition matrix in Eq.(\ref{eq:T}) can not be bounded. By modifying Eq.(\ref{eq:T}) to Eq.(\ref{eq:tansi}), SHKC makes up the flaws of performance degradation when meets isolated vertex.(iii) SHKC has a lower bias and standard deviation than others, showing better generalization. Furthermore, comparison of running time and computational complexity with existing hypergraph neural networks can be found in Appendix \ref{appsec:compute_complexity}.

\subsection{Visual Object Classification}
\vspace{-3mm}
\label{sub:object_classification}
\begin{table}[h]
\def\p{$\pm$} 
\centering
\setlength\tabcolsep{12pt} 
\caption{Test accuracy on visual object classification. GVCNN+MVCNN represents combining the features or structures to generate multi-modal data.}
\scalebox{0.8}{
\begin{tabular}{c|cc|ccccc}
    \toprule 
    \multirow{1}{*} { Datasets }& \multirow{1}{*} { Feature } & Structure  & HGNN & UniGNN & HGAT &  SHKC(ours) \\
    \hline
    \multirow{3}{*}{\begin{tabular}[c]{@{}l@{}}NTU\end{tabular}} 
    & MVCNN &  MVCNN & 80.11\p0.38  & 75.25$\pm$0.17 & 80.40$\pm$0.47  &  \textbf{82.56\p 0.39}  \\
    & GVCNN &  GVCNN  & 84.26\p0.30  & 84.63$\pm$ 0.21 & 84.45$\pm$0.12 &   83.35\p 0.30  \\
    & BOTH &  BOTH &  83.54\p0.50  & 84.45$\pm$0.40  & 84.05$\pm$0.36   &   \textbf{85.12\p0.25}  \\
    \hline
    \multirow{3}{*}{\begin{tabular}[c]{@{}l@{}}Model-\\Net40\end{tabular}} 
    & MVCNN       &  MVCNN       & 91.28\p 0.11  & 90.36$\pm$0.10 & 91.29$\pm$0.15             &  \textbf{92.01\p 0.08}  \\
    & GVCNN       &  GVCNN       &  92.53\p0.06  & \textbf{92.88$\pm$0.10} & 92.44$\pm$0.11 & 
    {92.69\p 0.06}    \\
    & BOTH& BOTH  &   {97.15\p0.14}  & 96.69$\pm$0.07 & 96.44$\pm$0.15  &  \textbf{97.78\p0.03}    \\
    \bottomrule
\end{tabular}
}
    \label{tab:object_accuracy} 
\end{table}
This experiment is about semi-supervised learning.
We employ two public benchmarks: Princeton ModelNet40
dataset~\citep{wu20153d} and the National Taiwan University~(NTU) 3D model dataset~\citep{chen2003visual} to evaluate our method. We follow HGNN~\citep{hgnn} to preprocess the data by MVCNN \citep{su2015multi} and GVCNN~\citep{feng2018gvcnn}.  Finally, we use the datasets provided by the public Code~\footnote{\url{https://github.com/iMoonLab/HGNN}}.
Details can be found in Appendix \ref{app:sub:object_classification}.

\textbf{Results.} 
Table \ref{tab:ModelNet40_accuracy} depicts that SHKC significantly outperform the image-input or point-input methods. These results demonstrate that SHKC can capture the similarity between objects in the hidden representation space to improve the performance of the classification task. 
Table \ref{tab:object_accuracy} compares our methods with HGNN on NTU and ModelNet40. 
\begin{wraptable}{r}{0.5\linewidth}
\def\p{$\pm$}
\centering
\setlength\tabcolsep{12 pt} 
\caption{ Classification accuracy  (\%) on  ModelNet40. The \textit{embedding} means the output representations of MVCNN+GVCNN Extractor.}
\scalebox{0.70}{
\begin{tabular}{l|cc}
    \toprule 
    \multirow{1}{*} { Methods } & \multirow{1}{*} { input }  & Accuracy \\
    \midrule
    MVCNN \citep{feng2018gvcnn} &  image  &  90.1  \\
    PointNet \citep{qi2017pointnet} &  point &  89.2  \\
    PointNet++ \citep{qi2017pointnet++} &  point &  90.1  \\
    DGCNN~\citep{wang2019dynamic} &  point &  92.2  \\
    InterpCNN~\citep{mao2019interpolated} & point & 93.0 \\
    SimpleView~\citep{uy2019revisiting} & image & 93.6 \\
    pAConv~\citep{xu2021paconv} & point & 93.9 \\
    \hline
    HGAT~\citep{HyperGAT} & embedding & 96.4 \\
    UniGNN~\citep{UniGNN} & embedding & 96.7 \\
    HGNN~\cite{hgnn} &  embedding &  97.2 \\
    \hline
    SHKC(ours) &    embedding  &  \textbf{ 97.7} \\
    \bottomrule
\end{tabular}
}
 \vspace{-15mm}
\label{tab:ModelNet40_accuracy}    
\end{wraptable}
From those results, we can see that our methods outperform HGNN on both single modality and multi-modality~(BOTH) datasets and our SHKC achieves much better performance on multi-modality compared  with others. These results reveal that our SHKC has the advantage of combining such multi-modal information through concatenating the  weighted incidence matrices~($\Q$) of hypergraphs, which means merging the multi-level hyperedges. 

\subsection{Over-Smoothing Analysis}\label{sec:Over-Smoothing_analysis}
It is worth to note that a stacked $k$-layers convolution could capture information from $k$-hop neighborhood. In order to capture long-range information, HGNN and HyperGCN are required to stack multi convolution layers leading to performance descending, as shown in Table \ref{tab:depth_accuracy}, which is well-known as over-smoothing issues. However, our proposed SHKC significantly avoids the performance descending by enlarge step of diffusion to capture long-range information rather than stacking multi convolution layers. In other words, SHKC can gain $k$-hop information by setting the diffusion step to $k$ rather than stacking $k$ one-step SHKC layers.    From \ref{tab:depth_accuracy}, when the number of layers in other models is same with diffusion steps in SHKC, it can be observed that SHKC outperforms the other models in almost all datasets, especially when number of layers is large. 
\begin{table}[htbp]
\setlength\tabcolsep{12pt}
\def\p{$\pm$} 
\centering
\caption{ Summary of classification accuracy (\%) results with various depths.  In our SHKC, the number of layers is equivalent to $t$ in Eq.~\eqref{eq:shkc}. We report mean test accuracy over 10 train-test splits. }
\scalebox{0.77}{
\begin{tabular} {p{1.8cm}p{2.4cm}|p{0.8cm}p{0.8cm}p{0.8cm}p{0.8cm}p{0.8cm}p{0.8cm}}
\toprule
\multirow{2}{*}{Dataset} & \multirow{2}{*}{Method} & \multicolumn{6}{c}{Layers/diffusion steps}  \\
& & 2     & 4     & 8     & 16    & 32    & 64       \\

\midrule
\multirow{3}{*}{\begin{tabular}[c]{@{}c@{}}Cora~\\(co-authorship)\end{tabular}} 
& HyperGCN\Done  & {60.66} & 57.50 & 31.09  &  31.10  & 30.09   &  31.09 \\
& HGNN\Done &{69.23} & 67.23 & 60.17 & 29.28 & 27.15 &  26.62    \\
& SHKC (ours)\Done & 74.60 & \textbf{75.78} & \textbf{75.70} & \textbf{75.04} & \textbf{75.26} & \textbf{74.79} \\
\midrule
\multirow{3}{*}{\begin{tabular}[c]{@{}l@{}}DBLP~\\(co-authorship)\end{tabular}}   
& HyperGCN\Done & {84.82} & 54.65 & 22.37 & 23.96 &  23.04  & 24.13      \\
& HGNN\Done & {88.55}& 88.28 & 85.38 & 27.64 &27.62 & 27.56 \\
& SHKC (ours)\Done & 86.63 & 88.26 & \textbf{89.00}& \textbf{89.17} & \textbf{89.05} &  \textbf{88.60} \\
\midrule
\multirow{3}{*}{\begin{tabular}[c]{@{}l@{}}Cora~\\(co-citation)\end{tabular}}     
& HyperGCN\Done & 62.35 & 58.29 & 31.09  & 31.17 & 31.09  & 29.68  \\
& HGNN\Done & {55.60}& 55.72& 42.10 & 26.16 & 24.40 & 24.43    \\
& SHKC (ours)\Done & 62.21 & 64.57 & \textbf{67.59} & \textbf{68.96} & \textbf{69.37} & \textbf{68.15} \\
\midrule
\multirow{3}{*}{\begin{tabular}[c]{@{}l@{}}Pubmed~\\(co-citation)\end{tabular}}   
& HyperGCN\Done & 68.12 & 63.59 & 39.99 & 39.97  & 40.01  & 40.02   \\
& HGNN\Done & {46.41} & 47.16 & 40.93 & 40.24 & 40.30 & 40.29    \\
& SHKC (ours)\Done & 74.39 & \textbf{74.91} & \textbf{74.41} & \textbf{73.90} & \textbf{72.79} & \textbf{71.49}  \\
\midrule
\multirow{3}{*}{\begin{tabular}[c]{@{}l@{}}Citeseer~\\(co-citation)\end{tabular}} 
& HyperGCN\Done & 56.94  & 36.75 & 20.72 & 20.41  & 20.16  & 18.95     \\
& HGNN\Done & {39.93 }& 38.98 & 36.67 & 19.91 &  19.86 & 19.79    \\
& SHKC (ours)\Done & 61.63 & \textbf{62.75} & \textbf{63.86} & \textbf{64.62} & \textbf{65.14} & \textbf{65.10}    \\
\bottomrule

\end{tabular}
}
\label{tab:depth_accuracy}
\end{table}

\vspace{-0.3cm}
\section{Conclusion}
In this paper, we review the design of convolution for discrete structure from a kernel perspective. We propose a two-phase paradigm 
that play roles in topology information aggregation and channel aggregation respectively to conduct convolutions. Specifically, we concentrate on hypergraph which is considered as a more general discrete structure to capture complex topology information. The proposed SHKC could adjust diffusion step to aggregate long-range information which avoids stacking multi existing convolution layers which leads to oversmoothing issues. Analysis based on uniform stability theorem corresponds to the outperforming empirical results on downstream tasks. 
\section{Acknowledgement}
This work was supported in part by the National Natural Science Foundation of China (61972219), the Research and Development Program of Shenzhen (JCYJ20190813174403598), the Overseas Research Cooperation Fund of Tsinghua Shenzhen International Graduate School (HW2021013).
\bibliographystyle{nips_bib}
\bibliography{reference}

\begin{thebibliography}{43}
\providecommand{\natexlab}[1]{#1}
\providecommand{\url}[1]{\texttt{#1}}
\expandafter\ifx\csname urlstyle\endcsname\relax
  \providecommand{\doi}[1]{doi: #1}\else
  \providecommand{\doi}{doi: \begingroup \urlstyle{rm}\Url}\fi

\bibitem[Agarwal et~al.(2006)Agarwal, Branson, and Belongie]{agarwal2006higher}
Agarwal, Sameer, Branson, Kristin, and Belongie, Serge.
\newblock Higher order learning with graphs.
\newblock In \emph{Proceedings of the 23rd international conference on Machine
  learning}, pp.\  17--24, 2006.

\bibitem[Arya et~al.(2020)Arya, Gupta, Rudinac, and Worring]{arya2020hypersage}
Arya, Devanshu, Gupta, Deepak~K, Rudinac, Stevan, and Worring, Marcel.
\newblock Hypersage: Generalizing inductive representation learning on
  hypergraphs.
\newblock \emph{arXiv preprint arXiv:2010.04558}, 2020.

\bibitem[Carletti et~al.(2021)Carletti, Fanelli, and
  Lambiotte]{carletti2021random}
Carletti, Timoteo, Fanelli, Duccio, and Lambiotte, Renaud.
\newblock Random walks and community detection in hypergraphs.
\newblock \emph{Journal of Physics: Complexity}, 2021.

\bibitem[Chen et~al.(2003)Chen, Tian, Shen, and Ouhyoung]{chen2003visual}
Chen, Ding-Yun, Tian, Xiao-Pei, Shen, Yu-Te, and Ouhyoung, Ming.
\newblock On visual similarity based 3d model retrieval.
\newblock In \emph{Computer graphics forum}, volume~22, pp.\  223--232. Wiley
  Online Library, 2003.

\bibitem[Chen \& Zhang(2022)Chen and Zhang]{chen2022preventing}
Chen, Guanzi and Zhang, Jiying.
\newblock Preventing over-smoothing for hypergraph neural networks.
\newblock \emph{arXiv preprint arXiv:2203.17159}, 2022.

\bibitem[Chen et~al.(2021)Chen, Bian, Zhang, Xiao, Xu, Rong, and
  Huang]{chen2021diversified}
Chen, Yuzhao, Bian, Yatao, Zhang, Jiying, Xiao, Xi, Xu, Tingyang, Rong, Yu, and
  Huang, Junzhou.
\newblock Diversified multiscale graph learning with graph self-correction.
\newblock \emph{arXiv preprint arXiv:2103.09754}, 2021.

\bibitem[Chitra \& Raphael(2019)Chitra and Raphael]{2019random}
Chitra, Uthsav and Raphael, Benjamin.
\newblock Random walks on hypergraphs with edge-dependent vertex weights.
\newblock In \emph{International Conference on Machine Learning}, pp.\
  1172--1181. PMLR, 2019.

\bibitem[Cong et~al.(2021)Cong, Ramezani, and Mahdavi]{cong2021provable}
Cong, Weilin, Ramezani, Morteza, and Mahdavi, Mehrdad.
\newblock On provable benefits of depth in training graph convolutional
  networks.
\newblock \emph{Advances in Neural Information Processing Systems}, 34, 2021.

\bibitem[Ding et~al.(2020)Ding, Wang, Li, Li, and Liu]{HyperGAT}
Ding, Kaize, Wang, Jianling, Li, Jundong, Li, Dingcheng, and Liu, Huan.
\newblock Be more with less: Hypergraph attention networks for inductive text
  classification.
\newblock In \emph{Proceedings of the 2020 Conference on Empirical Methods in
  Natural Language Processing (EMNLP)}, pp.\  4927--4936, 2020.

\bibitem[Dong et~al.(2020)Dong, Sawin, and Bengio]{dong2020hnhn}
Dong, Yihe, Sawin, Will, and Bengio, Yoshua.
\newblock Hnhn: Hypergraph networks with hyperedge neurons.
\newblock \emph{arXiv preprint arXiv:2006.12278}, 2020.

\bibitem[El-Yaniv \& Pechyony(2006)El-Yaniv and Pechyony]{el2006stable}
El-Yaniv, Ran and Pechyony, Dmitry.
\newblock Stable transductive learning.
\newblock In \emph{International Conference on Computational Learning Theory},
  pp.\  35--49. Springer, 2006.

\bibitem[Feng et~al.(2018)Feng, Zhang, Zhao, Ji, and Gao]{feng2018gvcnn}
Feng, Yifan, Zhang, Zizhao, Zhao, Xibin, Ji, Rongrong, and Gao, Yue.
\newblock Gvcnn: Group-view convolutional neural networks for 3d shape
  recognition.
\newblock In \emph{Proceedings of the IEEE Conference on Computer Vision and
  Pattern Recognition}, pp.\  264--272, 2018.

\bibitem[Feng et~al.(2019)Feng, You, Zhang, Ji, and Gao]{hgnn}
Feng, Yifan, You, Haoxuan, Zhang, Zizhao, Ji, Rongrong, and Gao, Yue.
\newblock Hypergraph neural networks.
\newblock In \emph{Proceedings of the AAAI Conference on Artificial
  Intelligence}, volume~33, pp.\  3558--3565, 2019.

\bibitem[Fouss et~al.(2006)Fouss, Yen, Pirotte, and
  Saerens]{fouss2006experimental}
Fouss, Francois, Yen, Luh, Pirotte, Alain, and Saerens, Marco.
\newblock An experimental investigation of graph kernels on a collaborative
  recommendation task.
\newblock In \emph{Sixth International Conference on Data Mining (ICDM'06)},
  pp.\  863--868. IEEE, 2006.

\bibitem[Fouss et~al.(2012)Fouss, Francoisse, Yen, Pirotte, and
  Saerens]{fouss2012experimental}
Fouss, Fran{\c{c}}ois, Francoisse, Kevin, Yen, Luh, Pirotte, Alain, and
  Saerens, Marco.
\newblock An experimental investigation of kernels on graphs for collaborative
  recommendation and semisupervised classification.
\newblock \emph{Neural networks}, 31:\penalty0 53--72, 2012.

\bibitem[Haussler(1999)]{haussler1999convolution}
Haussler, David.
\newblock Convolution kernels on discrete structures.
\newblock Technical report, Technical report, Department of Computer Science,
  University of California~…, 1999.

\bibitem[Huang \& Yang(2021)Huang and Yang]{UniGNN}
Huang, Jing and Yang, Jie.
\newblock Unignn: a unified framework for graph and hypergraph neural networks.
\newblock In \emph{Proceedings of the Thirtieth International Joint Conference
  on Artificial Intelligence, {IJCAI-21}}, 2021.

\bibitem[Kipf \& Welling(2017{\natexlab{a}})Kipf and Welling]{gcn}
Kipf, Thomas~N and Welling, Max.
\newblock Semi-supervised classification with graph convolutional networks.
\newblock In \emph{Proceedings of the International Conference on Learning
  Representations}, 2017{\natexlab{a}}.

\bibitem[Kipf \& Welling(2017{\natexlab{b}})Kipf and Welling]{kipf2016semi}
Kipf, Thomas~N and Welling, Max.
\newblock Semi-supervised classification with graph convolutional networks.
\newblock In \emph{Proceedings of the International Conference on Learning
  Representations}, 2017{\natexlab{b}}.

\bibitem[Klicpera et~al.(2018)Klicpera, Bojchevski, and G{\"u}nnemann]{ppnp}
Klicpera, Johannes, Bojchevski, Aleksandar, and G{\"u}nnemann, Stephan.
\newblock Predict then propagate: Graph neural networks meet personalized
  pagerank.
\newblock \emph{arXiv preprint arXiv:1810.05997}, 2018.

\bibitem[Klicpera et~al.(2019)Klicpera, Wei{\ss}enberger, and
  G{\"u}nnemann]{klicpera2019diffusion}
Klicpera, Johannes, Wei{\ss}enberger, Stefan, and G{\"u}nnemann, Stephan.
\newblock Diffusion improves graph learning.
\newblock \emph{arXiv preprint arXiv:1911.05485}, 2019.

\bibitem[Kondor \& Lafferty(2002)Kondor and Lafferty]{kondor2002diffusion}
Kondor, Risi~Imre and Lafferty, John.
\newblock Diffusion kernels on graphs and other discrete structures.
\newblock In \emph{Proceedings of the 19th international conference on machine
  learning}, volume 2002, pp.\  315--322, 2002.

\bibitem[Li et~al.(2020)Li, He, and Milenkovic]{li2020quadratic}
Li, Pan, He, Niao, and Milenkovic, Olgica.
\newblock Quadratic decomposable submodular function minimization: Theory and
  practice.
\newblock \emph{J. Mach. Learn. Res.}, 21:\penalty0 106--1, 2020.

\bibitem[Mao et~al.(2019)Mao, Wang, and Li]{mao2019interpolated}
Mao, Jiageng, Wang, Xiaogang, and Li, Hongsheng.
\newblock Interpolated convolutional networks for 3d point cloud understanding.
\newblock In \emph{Proceedings of the IEEE/CVF International Conference on
  Computer Vision}, pp.\  1578--1587, 2019.

\bibitem[Qi et~al.(2017{\natexlab{a}})Qi, Su, Mo, and Guibas]{qi2017pointnet}
Qi, Charles~R, Su, Hao, Mo, Kaichun, and Guibas, Leonidas~J.
\newblock Pointnet: Deep learning on point sets for 3d classification and
  segmentation.
\newblock In \emph{Proceedings of the IEEE conference on computer vision and
  pattern recognition}, pp.\  652--660, 2017{\natexlab{a}}.

\bibitem[Qi et~al.(2017{\natexlab{b}})Qi, Yi, Su, and Guibas]{qi2017pointnet++}
Qi, Charles~Ruizhongtai, Yi, Li, Su, Hao, and Guibas, Leonidas~J.
\newblock Pointnet++: Deep hierarchical feature learning on point sets in a
  metric space.
\newblock In \emph{NIPS}, 2017{\natexlab{b}}.

\bibitem[Rossi \& Ahmed(2015)Rossi and Ahmed]{rossi2015network}
Rossi, Ryan and Ahmed, Nesreen.
\newblock The network data repository with interactive graph analytics and
  visualization.
\newblock In \emph{Proceedings of the AAAI Conference on Artificial
  Intelligence}, volume~29, 2015.

\bibitem[Sch{\"o}lkopf et~al.(2002)Sch{\"o}lkopf, Smola, Bach,
  et~al.]{scholkopf2002learning}
Sch{\"o}lkopf, Bernhard, Smola, Alexander~J, Bach, Francis, et~al.
\newblock \emph{Learning with kernels: support vector machines, regularization,
  optimization, and beyond}.
\newblock MIT press, 2002.

\bibitem[Sen et~al.(2008)Sen, Namata, Bilgic, Getoor, Galligher, and
  Eliassi-Rad]{sen2008collective}
Sen, Prithviraj, Namata, Galileo, Bilgic, Mustafa, Getoor, Lise, Galligher,
  Brian, and Eliassi-Rad, Tina.
\newblock Collective classification in network data.
\newblock \emph{AI magazine}, 29\penalty0 (3):\penalty0 93--93, 2008.

\bibitem[Shawe-Taylor et~al.(2004)Shawe-Taylor, Cristianini,
  et~al.]{shawe2004kernel}
Shawe-Taylor, John, Cristianini, Nello, et~al.
\newblock \emph{Kernel methods for pattern analysis}.
\newblock Cambridge university press, 2004.

\bibitem[Smola \& Kondor(2003)Smola and Kondor]{smola2003kernels}
Smola, Alexander~J and Kondor, Risi.
\newblock Kernels and regularization on graphs.
\newblock In \emph{Learning theory and kernel machines}, pp.\  144--158.
  Springer, 2003.

\bibitem[Su et~al.(2015)Su, Maji, Kalogerakis, and Learned-Miller]{su2015multi}
Su, Hang, Maji, Subhransu, Kalogerakis, Evangelos, and Learned-Miller, Erik.
\newblock Multi-view convolutional neural networks for 3d shape recognition.
\newblock In \emph{Proceedings of the IEEE international conference on computer
  vision}, pp.\  945--953, 2015.

\bibitem[Uy et~al.(2019)Uy, Pham, Hua, Nguyen, and Yeung]{uy2019revisiting}
Uy, Mikaela~Angelina, Pham, Quang-Hieu, Hua, Binh-Son, Nguyen, Thanh, and
  Yeung, Sai-Kit.
\newblock Revisiting point cloud classification: A new benchmark dataset and
  classification model on real-world data.
\newblock In \emph{2019 IEEE/CVF International Conference on Computer Vision
  (ICCV)}, pp.\  1588--1597. IEEE Computer Society, 2019.

\bibitem[Wang et~al.(2019)Wang, Sun, Liu, Sarma, Bronstein, and
  Solomon]{wang2019dynamic}
Wang, Yue, Sun, Yongbin, Liu, Ziwei, Sarma, Sanjay~E, Bronstein, Michael~M, and
  Solomon, Justin~M.
\newblock Dynamic graph cnn for learning on point clouds.
\newblock \emph{Acm Transactions On Graphics (tog)}, 38\penalty0 (5):\penalty0
  1--12, 2019.

\bibitem[Wu et~al.(2015)Wu, Song, Khosla, Yu, Zhang, Tang, and Xiao]{wu20153d}
Wu, Zhirong, Song, Shuran, Khosla, Aditya, Yu, Fisher, Zhang, Linguang, Tang,
  Xiaoou, and Xiao, Jianxiong.
\newblock 3d shapenets: A deep representation for volumetric shapes.
\newblock In \emph{Proceedings of the IEEE conference on computer vision and
  pattern recognition}, pp.\  1912--1920, 2015.

\bibitem[Xu et~al.(2021)Xu, Ding, Zhao, and Qi]{xu2021paconv}
Xu, Mutian, Ding, Runyu, Zhao, Hengshuang, and Qi, Xiaojuan.
\newblock Paconv: Position adaptive convolution with dynamic kernel assembling
  on point clouds.
\newblock \emph{arXiv preprint arXiv:2103.14635}, 2021.

\bibitem[Yadati et~al.(2019)Yadati, Nimishakavi, Yadav, Nitin, Louis, and
  Talukdar]{hypergcn}
Yadati, Naganand, Nimishakavi, Madhav, Yadav, Prateek, Nitin, Vikram, Louis,
  Anand, and Talukdar, Partha.
\newblock Hypergcn: A new method for training graph convolutional networks on
  hypergraphs.
\newblock In \emph{Advances in Neural Information Processing Systems}, pp.\
  1511--1522, 2019.

\bibitem[Zhang et~al.(2022{\natexlab{a}})Zhang, Chen, Xiao, Lu, and
  Xia]{zhang2022learnable}
Zhang, Jiying, Chen, Yuzhao, Xiao, Xi, Lu, Runiu, and Xia, Shu-Tao.
\newblock Learnable hypergraph laplacian for hypergraph learning.
\newblock In \emph{ICASSP 2022-2022 IEEE International Conference on Acoustics,
  Speech and Signal Processing (ICASSP)}, pp.\  4503--4507. IEEE,
  2022{\natexlab{a}}.

\bibitem[Zhang et~al.(2022{\natexlab{b}})Zhang, Li, Xiao, Xu, Rong, Huang, and
  Bian]{ourspaper}
Zhang, Jiying, Li, Fuyang, Xiao, Xi, Xu, Tingyang, Rong, Yu, Huang, Junzhou,
  and Bian, Yatao.
\newblock Hypergraph convolutional networks via equivalency between hypergraphs
  and undirected graphs, 2022{\natexlab{b}}.
\newblock URL \url{https://arxiv.org/abs/2203.16939}.

\bibitem[Zhang et~al.(2022{\natexlab{c}})Zhang, Li, Xiao, Xu, Rong, Huang, and
  Bian]{zhang2022hypergraph}
Zhang, Jiying, Li, Fuyang, Xiao, Xi, Xu, Tingyang, Rong, Yu, Huang, Junzhou,
  and Bian, Yatao.
\newblock Hypergraph convolutional networks via equivalency between hypergraphs
  and undirected graphs.
\newblock \emph{arXiv preprint arXiv:2203.16939}, 2022{\natexlab{c}}.

\bibitem[Zhang et~al.(2022{\natexlab{d}})Zhang, Xiao, Huang, Rong, and
  Bian]{zhang2022fine}
Zhang, Jiying, Xiao, Xi, Huang, Long-Kai, Rong, Yu, and Bian, Yatao.
\newblock Fine-tuning graph neural networks via graph topology induced optimal
  transport.
\newblock In Raedt, Lud~De (ed.), \emph{Proceedings of the Thirty-First
  International Joint Conference on Artificial Intelligence, {IJCAI-22}}, pp.\
  3730--3736. International Joint Conferences on Artificial Intelligence
  Organization, 7 2022{\natexlab{d}}.
\newblock \doi{10.24963/ijcai.2022/518}.
\newblock URL \url{https://doi.org/10.24963/ijcai.2022/518}.
\newblock Main Track.

\bibitem[Zhang et~al.(2018)Zhang, Lin, Gao, and BNRist]{zhang2018dynamic}
Zhang, Zizhao, Lin, Haojie, Gao, Yue, and BNRist, KLISS.
\newblock Dynamic hypergraph structure learning.
\newblock In \emph{IJCAI}, pp.\  3162--3169, 2018.

\bibitem[Zhou et~al.(2006)Zhou, Huang, and Sch{\"o}lkopf]{zhou2006learning}
Zhou, Dengyong, Huang, Jiayuan, and Sch{\"o}lkopf, Bernhard.
\newblock Learning with hypergraphs: Clustering, classification, and embedding.
\newblock \emph{Advances in neural information processing systems},
  19:\penalty0 1601--1608, 2006.

\end{thebibliography}

\newpage
\appendix
 \section{Details of experiments}
 
 All the settings of our experiments follow \cite{ourspaper}. The details show below.
 
 \subsection{Citation Network Classification}
\paragraph{Datasets.}
The datasets we use for citation network classification include co-authorship and co-citation datasets: PubMed, Citeseer, Cora~\citep{sen2008collective} and DBLP~\citep{rossi2015network}. We adopt the hypergraph version of those datasets directly from  \citet{hypergcn}, where
hypergraphs are created on these datasets by assigning each document as a node and each hyperedge
represents (a) all documents co-authored by an author in the co-authorship dataset and (b) all documents
cited together by a document in co-citation dataset.  The initial features of each document (vertex) are represented by bag-of-words features. The details about vertices, hyperedges and features are shown in Table~\ref{tab:dataset_hypergcn}.

\begin{table}[htbp]
\centering
\caption{Real-world hypergraph datasets used in our citation network classification task.}
\renewcommand\tabcolsep{5.0pt} 
\label{tab:dataset_hypergcn}
\vspace{-0mm}
\scalebox{0.85}{
\begin{tabular}{l|c|c|c|c|c}
\toprule
Dataset& \# vertices & \# Hyperedges & \# Features & \# Classes &  \# isolated vertices\\
\midrule
\textbf{Cora}~(co-authorship)
& 2708         & 1072         &1433   & 7     & 320(11.8\%)   \\

\textbf{DBLP}~(co-authorship)
& 43413        & 22535        & 1425       & 6     & 0 (0.0\%)     \\

\textbf{Pubmed}~(co-citation)
& 19717        & 7963         & 500        & 3      & 15877 (80.5\%)   \\

\textbf{Cora}~(co-citation)
& 2708         & 1579         & 1433       & 7     & 1274 (47.0\%)     \\

\textbf{Citeseer}~(co-citation)
& 3312         & 1079         & 3703       & 6     & 1854 (55.9\%)     \\
\bottomrule
\end{tabular}
}
\end{table}

\paragraph{Settings and baselines.}

 We adopt the same dataset and train-test splits (10 splits) as provided in their publically available implementation\footnote{https://github.com/malllabiisc/HyperGCN, Apache License}. Note that this dataset just has the edge-independent vertex weights $\H$, which is also called the incidence matrix. So this experiment can be regarded as a special case of the specific application of our model~(i.e. $\Q=\H$).  
 
 For baselines MLP+HLR, HNHN~\citep{dong2020hnhn}~, HyperSAGE~\citep{arya2020hypersage}, UniGNN~\citep{UniGNN}, HGNN~\citep{hgnn} and FastHyperGCN~\citep{hypergcn}, HyperGCN~\citep{hypergcn}, UniGNN~\citep{UniGNN} we reuse the results reported by \citet{UniGNN}.
 For HNHN~\citep{dong2020hnhn} and HGAT~\citep{HyperGAT}, we implement them according to their public code.
 
We use cross-entropy loss and Adam SGD optimizer with early stopping with the patience of 100 epochs to train SHSC.
 For hyper-parameters, we use the grid search strategy.
 More details of hyper-parameters can be found in Table \ref{tab:hyper-parameter-citation_object}. 
 
\paragraph{Running Time and Computational Complexity}\label{appsec:compute_complexity}
Firstly, we analyze the theoretical computational complexity of SHKC:
For SHKC, the computational cost is the $\mathcal{O}(K|E|d+K|\mathcal V|d)$, which includes $K$ sparse matrix multiplication and $K$ summation over filters($|\mathcal V|d$  is the cost of adding features $X$).
Then, we compare the running time with existing models in Table \ref{tab:runningTime}. The results illustrate that our method is of the same order of magnitude as SOTA's approach UniGNN and outperforms the HyperGCN and HGAT.

\begin{table}[htbp]

\centering
\def\p{$\pm$} 
\vspace{-.2cm}
\caption{The average training time per epoch with different methods on citation network classification task is shown below and timings are measured in seconds.(OOM: Out of Memory)}
\scalebox{0.8}{

    \begin{tabular}{c|c|c|c|c|c}
    \toprule
     \text{Methods}  & \text{cora coauthorship} & \text{dblp coauthorship} & \text{cora cocitation} & \text{pubmed cocitation} & \text{citeseer cocitation} \\
    \midrule
    \text{HyperGCN} & 0.150\p0.058 & 1.181\p0.071 & 0.151\p0.029& 1.203\p0.104	& 0.130\p0.029 \\
    \text{HGNN} & 0.005\p0.002 & 0.081\p0.006 & 0.005\p0.040 & 0.008\p0.002 & 	0.005\p0.002\\
    \text{UniGNN } & 0.014\p0.044 & 0.042\p0.040 & 0.014\p0.042 & 0.023\p0.043 & 0.0168\p0.043\\
    \text{HNHN} & 0.001\p0.0026 & 0.007\p0.014 & 0.0010\p0.004 & 0.009\p0.006 &  0.001\p0.003 \\
    HGAT & 0.381\p0.080 & OOM & 0.279\p0.083 & 1.329\p0.016 &0.286\p0.087 \\
    \midrule
    SHKC (ours) & 0.055\p0.001 & 0.291\p0.001 & 0.205\p0.003 & 0.135\p0.056 & 0.193\p0.057 \\
    \bottomrule
    \end{tabular}
}
 \vspace{-.5cm}
\label{tab:runningTime}
\end{table}

 \subsection{Visual Object Classification} \label{app:sub:object_classification}
\begin{table}[htbp]
\centering
\vspace{-3mm}
\caption{summary of the ModelNet40 and NTU datasets}

\scalebox{0.8}{
    \begin{tabular}{c|c|c}
    \toprule
    \text { Dataset } & \text { ModelNet40 } & \text { NTU } \\
    \midrule
    \text { Objects } & 12311 & 2012 \\
    \text { MVCNN Feature } & 4096 & 4096 \\
    \text { GVCNN Feature } & 2048 & 2048 \\
    \text { Training node } & 9843 & 1639 \\
    \text { Testing node } & 2468 & 373 \\
    \text { Classes } & 40 & 67 \\
    \bottomrule
    \end{tabular}
}
\vspace{-2mm}
\label{tab:object_dataset}
\end{table}

\paragraph{Datasets and Settings.}
We employ two public benchmarks:  Princeton ModelNet40
dataset~\citep{wu20153d} and the National Taiwan University
(NTU) 3D model dataset~\citep{chen2003visual}, as shown in Table \ref{tab:object_dataset}.

In this experiment, each 3D object is represented by the feature vectors which are extracted by Multi-view Convolutional Neural Network (MVCNN) \citep{su2015multi} or Group-View Convolutional Neural Network (GVCNN) \citep{feng2018gvcnn}. The features generated by different methods can be considered as multi-modality features. The hypergraph structure we designed is similar to \citet{zhang2018dynamic}(
but they did not give spectral guarantees for supporting the rationality of their practices). We represent the hypergraph structure as an edge-dependent vertex weight $\Q$ (i.e. $\Q_1=\Q_2=\Q$). Specifically, we firstly generate hyperedges by $k$\textit{-NN} approach, i.e. each time one object can be selected as a centroid and its $k$ nearest neighbors are used to generate one hyperedge including the centroid itself~(in our experiment, we set $k=6$). Then, given the features of the data, the  vertex-weight matrix $\mathbf{Q}$ is defined as 
\begin{align}
    \Q(v,e)=\left\{\begin{array}{cl}
	\operatorname{exp}(\frac{-d(v,v_c)}{\gamma\hat{d}^2}), &\text{if } v\in e  \\
	0, & \text{otherwise},
\end{array}\right.
\end{align}

where $d(v,v_c)$ is the euclidean distance of features between an object $v$ and the centroid object $v_c$ in the hyperedge and $\hat{d}$ is the average distance between objects. $\gamma$ is a hyper-parameter to control the flatness.
As we have two-modality features generated by MVCNN and GVCNN, we can obtain the matrix $\Q_{\{i\}}$ which corresponds to the data of the $i$-th modality~($i\in\{1,2\}$). After all the hypergraphs from different features have been generated, these matrices $\Q_{\{i\}}$ can be concatenated to build the multi-modality hypergraph matrix $\Q=[\Q_{\{1\}},\Q_{\{2\}}]$. The features generated by GVCNN or MVCNN can be singly used, or concatenated to a multi-modal feature for constructing the hypergraphs. 
We use cross-entropy loss and Adam SGD optimizer with early stopping with the patience of 100 epochs to train SHSC. 
More details of hyper-parameters can be found in Table \ref{tab:hyper-parameter-citation_object}. 
\begin{table}[htbp] 
\centering
\caption{Hyper-parameter search range for citation network classification and visual object classification.}
\vspace{1mm}
\resizebox{0.8 \linewidth}{!}{
\begin{tabular}{c|c|c}
\toprule
     Methods & Hyper-parameter        & Range               \\ 
\midrule
    \multirow{10}{*}{\begin{tabular}[c]{@{}l@{}}SHKC\end{tabular}}
    & $\sigma$ & \{-2,-1,-0.5,0,0.5,1,2\} \\
    & $\gamma$(visual object classification)    & \{0.1, 0.2, 0.4, 0.5, 0.8,1.0\} \\
    & Learning rate        & \{0.001, 0.005, 0.01\}                  \\
    & Hidden dimension & \{128\} \\
    & Layers & \{2,4,6,8,16,32,64\} \\
    & Weight decay & \{1e-3,1e-4,5e-4, 1e-5\}  \\
    & $\alpha$ & \{1,0.97,0.95,0.9,0.85,0.8,0.75,0.7,0.65,0.6\}  \\
    & $\beta$ & \{ 1,0.95,0.90,0.85,0.8 \}  \\
    & Optimizer & Adam  \\
    & Epoch & 1000 \\
    & Early stopping patience & 100 \\
    & GPU & Tesla V100 \\
    
\bottomrule                     
\end{tabular}                           
}
\label{tab:hyper-parameter-citation_object}
\end{table}

\begin{figure*}[htbp]

\centering
    \subfigure{
    \begin{minipage}[s]{0.3\linewidth}
    \centering
    \includegraphics[width=1\linewidth]{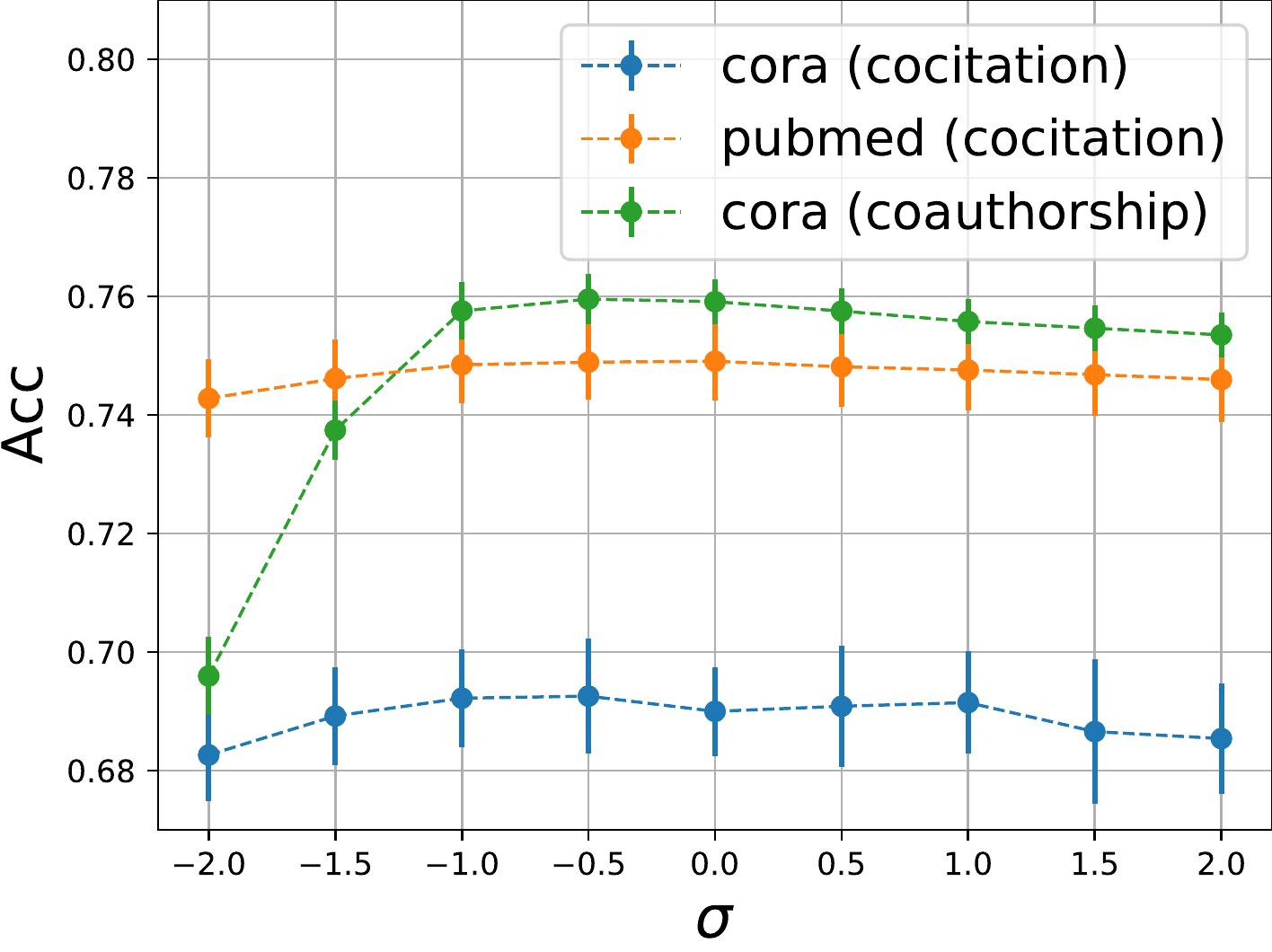}
    \end{minipage}%
    }%
    \quad 
    \subfigure{
    \begin{minipage}[s]{0.3\linewidth}
    \centering
    \includegraphics[width=1\linewidth]{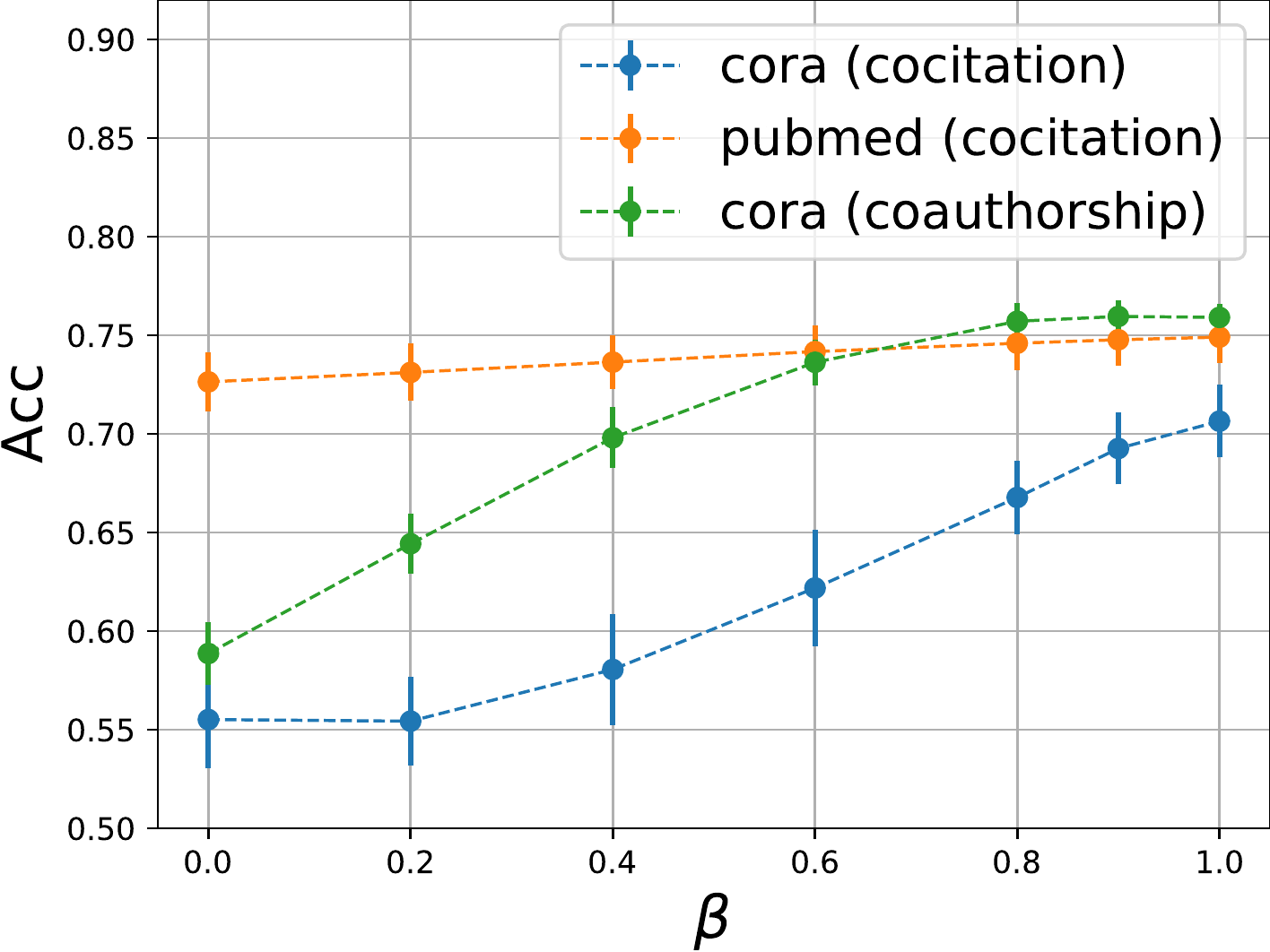}
    \end{minipage}%
    }%
    \quad
    \subfigure{
    \begin{minipage}[s]{0.3\linewidth}
    \centering
    \includegraphics[width=1\linewidth]{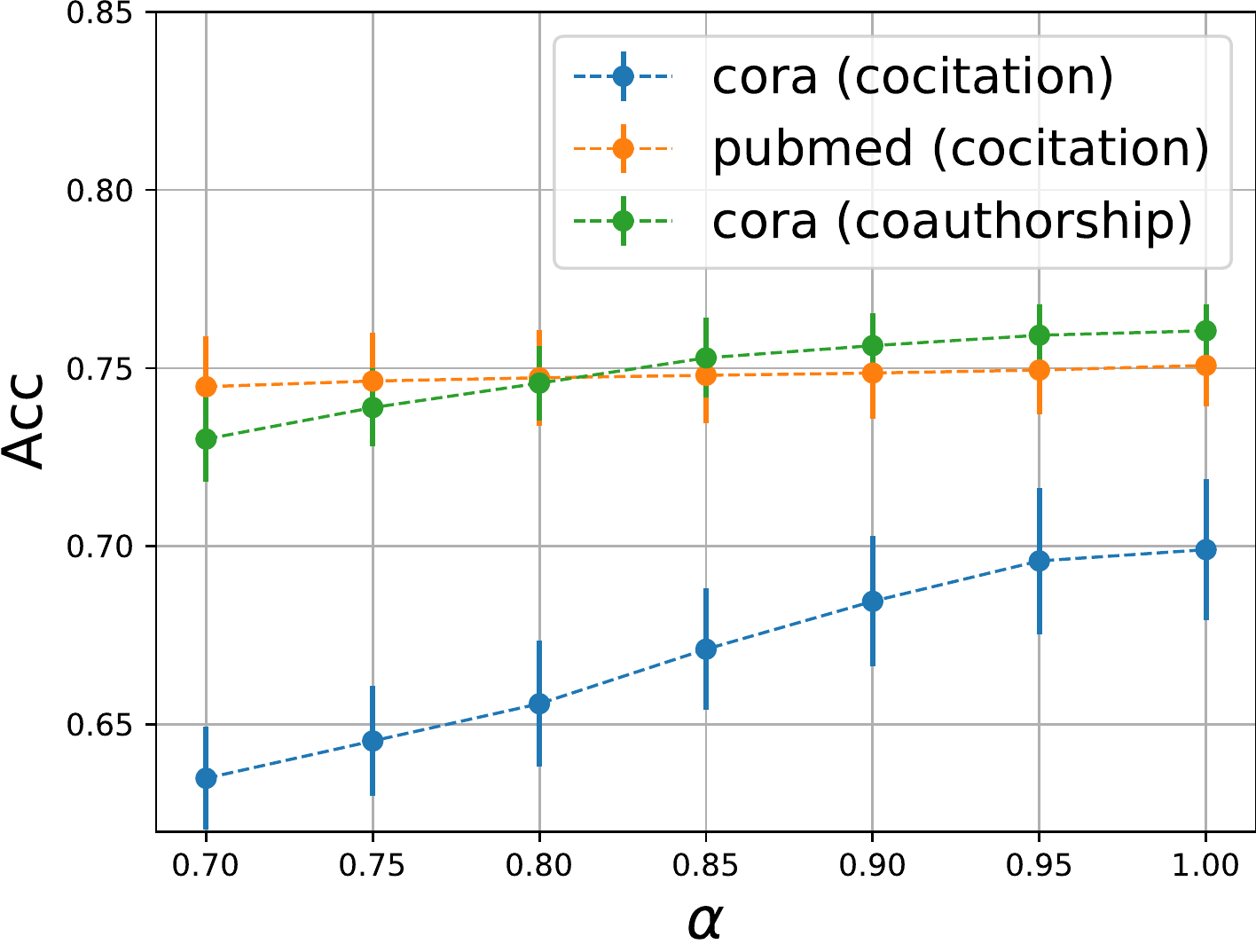}
    \end{minipage}
    }%
\centering
\caption{Test accuracy by varying the hyper-parameters $\sigma$~(left),  $\beta$~(middle) and $\alpha$~(right). }
\label{fig:sensitivity}
\end{figure*}

\subsection{Sensitivity Analysis}\label{sec:Sensitivity_analysis}
 Performance of SHKC on co-authorship cora, co-citation cora and co-citation pubmed with different $\rho$, $\beta$ and $\alpha$ is reported in Figure \ref{fig:sensitivity} where $\rho(\cdot) = (\cdot)^{\sigma}$. For $\sigma$, we can see that the best choice will vary depending on the dataset but mainly concentrate around $-0.5$, which verifies that the effect of hyperedges degree to the transition probability is various and it has negative effect in most cases. For $\beta$, with the growth of $\beta$, the performance of SHKC stably increases, which means that the topology information is successfully aggregated through the defined diffusion process. Moreover, the fact that the performance remains stable on cora~(coauthorship) and pubmed~(cocitation) when $\beta$ is at 0.8-1.0 suggests we can only adjust this hyper-parameter at a range of value close to 1. For $\alpha$, the tendency of it is similar to $\beta$, so we can also adjust this hyper-parameter at a range of value close to 1 to obtain a satisfying performance.

\section{Spectral view of SHKC}
\label{app:spectral_ana}
\paragraph{\textbf{Spectral View of SHKC.}}

Actually, SHKC is a spatial-based model while we can analyze it from a spectral-based view.
Let $\L = \I - \Tilde{\mbf T}$ denote the normalized Laplacian matrix. 
Then the SHKC can be viewed as a special spectral-based polynomial filter through $\sum_{i=0}^{I}\xi_{i}\L^{i} = \sum_{k=0}^{K}\theta_{k}\Tilde{\mbf T}^{k}$ where $\theta_{k}=\frac{\alpha^{k}}{K}$. Finally, we can deduce the coefficients of the special polynomial filter as $\xi_{i} =(-1)^{i}\sum_{k=i}^{K}\binom{k}{i}\frac{\alpha^{k}}{K}$ showing the strong relationships between SHKC and spectral-based model.
The explicit form of $\xi_{i}$ can be derived as:
\begin{align*}
    \sum_{k=0}^{K}\theta_{k}\Tilde{\mbf T}^{k} &= \sum_{k=0}^{K}\theta_{k}(\I-\L)^{k} = \sum_{k=0}^{K}\theta_{k}\sum_{i=0}^{k}\binom{k}{i}(-1)^{i}\L^{i}\\
    &= \sum_{i=0}^{K}\left(\sum_{k=i}^{K}(-1)^{i}\binom{k}{i}\frac{\alpha^{k}}{K}\right)\L^{i}
\end{align*}
\paragraph{The Spectrum Analysis of SHKC}
We calculate the eigenvalues of $\left(\beta\sum_{k=1}^{K}\frac{\alpha^{k}}{K} \tilde{\mathbf{T}}^{k} + (1-\beta)\mathbf{I}\right)$  with various $(\alpha,\beta)$ on NTU2012 dataset ($|\mathcal V|=2012$)  and count the number of eigenvalues in different size ranges, which is shown in Table \ref{tab:eigenvalues}. The table suggests that our SHKC can capture both low and high-frequency information of the graph signal, depending on the selection of the appropriate hyper-parameters $(\alpha,\beta)$.
\begin{table*}[ht]
    \centering
    \vspace{0cm} 
    \caption{The number of eigenvalues of SHKC in different size ranges on NTU2012 dataset.}
    \setlength\tabcolsep{4pt} 
    
    \scalebox{0.8}{
    \begin{tabular}{c|c c c c c c c c c c}
    \toprule
          $\alpha, \beta$ &$\lambda \geq 0.0$ & $\lambda \geq 0.1$ & $\lambda \geq 0.2$ & $\lambda \geq 0.3$ & $\lambda \geq 0.4$& $\lambda \geq 0.5$ & $\lambda \geq 0.6$ & $\lambda \geq 0.7$ & $\lambda \geq 0.8$ & $\lambda \geq 0.9$   \\
        \midrule
         (1,1)& 2012 & 364 & 163 & 108 & 82 & 66 & 52 & 38 & 26 & 13 \\
        (1,0.8)& 2012 & 327 & 140 & 90 & 71 & 51 & 33 & 19 & 1 & 0 \\
        (0.8,1)& 2012 & 161 & 40 & 0 & 0 & 0 & 0 & 0 & 0 & 0 \\
        (0.8,0.8)&2012 & 139 & 8 & 0 & 0 & 0 & 0 & 0 & 0 & 0 \\
        \bottomrule
    \end{tabular}
    }
     \vspace{-0.3cm}
    \label{tab:eigenvalues}
\end{table*}

\section{Main Lemma.}\label{app:main_lemma}

\begin{restatable}{lemma}{restartlmone}
\label{lm:hmax}
Let $h_{max}^{(l)} = max_i \Vert [\mathbf{H}^{(l)}]_{i,:}\Vert_2$ be the maximum norm of the vertex hidden representation with an $l$-step diffusion. Then for $L$, $h_{max}^{(L)}$ is bounded by:
\vspace{-0.2cm}
\begin{align*}
\small
    h_{max}^{(L)} \leq (\frac{\beta}{L}\sum_{l=1}^{L}(\alpha \sqrt{1+\rho_{max}ED})^l + 1-\beta)C_x C_{\Theta} 
\end{align*}
\vspace{-0.2cm}
\end{restatable}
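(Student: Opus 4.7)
The plan is to peel off the components of $\mathbf{H}^{(L)} = \psi\bigl(\mathbf{M}\mathbf{X}\mathbf{\Theta}\bigr)$ one by one, where $\mathbf{M} = \beta\sum_{\tau=1}^{L}\frac{\alpha^{\tau}}{L} \tilde{\mathbf{T}}^{\tau} + (1-\beta)\mathbf{I}$, and absorb each into the correct factor of the claimed bound. First I would fix an arbitrary row index $i$ and reduce to bounding $\|[\mathbf{M}\mathbf{X}\mathbf{\Theta}]_{i,:}\|_2$, invoking a standard assumption that the activation $\psi$ is 1-Lipschitz with $\psi(0)=0$ (this holds for the activations used in practice, e.g.\ ReLU), so that $\|\psi(\mathbf{v})\|_2 \leq \|\mathbf{v}\|_2$ componentwise on the rows.

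Next I would expand $[\mathbf{M}\mathbf{X}\mathbf{\Theta}]_{i,:} = [\mathbf{M}]_{i,:}\mathbf{X}\mathbf{\Theta}$ and apply $\|\mathbf{v}\mathbf{\Theta}\|_2 \leq \|\mathbf{v}\|_2\|\mathbf{\Theta}\|_2 \leq C_{\mathbf{\Theta}}\|\mathbf{v}\|_2$ (spectral norm of $\mathbf{\Theta}$ bounded by hypothesis (ii)). For the remaining factor I would write $[\mathbf{M}]_{i,:}\mathbf{X} = \sum_{j} M_{ij}[\mathbf{X}]_{j,:}$ and use the triangle inequality combined with hypothesis (i) ($\|[\mathbf{X}]_{j,:}\|_2 \leq C_x$) to obtain $\|[\mathbf{M}]_{i,:}\mathbf{X}\|_2 \leq C_x \sum_{j}|M_{ij}| = C_x\|[\mathbf{M}]_{i,:}\|_1$. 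So the problem reduces to bounding the maximum absolute row sum of $\mathbf{M}$, that is, $\|\mathbf{M}\|_\infty$.

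The last step is to control $\|\mathbf{M}\|_\infty$ using the linearity of the matrix norm and submultiplicativity, together with the fact that $\tilde{\mathbf{T}}$ is symmetric so $\|\tilde{\mathbf{T}}\|_\infty = \|\tilde{\mathbf{T}}\|_1 \leq \sqrt{1+\rho_{\max}ED}$ by Proposition \ref{Th:Tl1}. Then $\|\tilde{\mathbf{T}}^{\tau}\|_\infty \leq (\sqrt{1+\rho_{\max}ED})^{\tau}$ for every $\tau$, so
\begin{align*}
\|\mathbf{M}\|_\infty \leq \frac{\beta}{L}\sum_{\tau=1}^{L}\alpha^{\tau}\bigl(\sqrt{1+\rho_{\max}ED}\bigr)^{\tau} + (1-\beta).
\end{align*}
Chaining the three inequalities, taking the maximum over $i$, and relabeling $\tau \to l$ yields exactly the claimed bound.

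The main obstacle, more bookkeeping than conceptual, is keeping the norms consistent: the Proposition gives an $\ell_1$ bound, but what appears naturally after pulling $\mathbf{X}$ and $\mathbf{\Theta}$ out is the maximum absolute row sum of $\mathbf{M}$, which is the $\ell_\infty$ operator norm. Symmetry of $\tilde{\mathbf{T}}$ (and hence of $\mathbf{M}$) reconciles these two, and this is the only nontrivial point in the argument; aside from that, one must also be careful that submultiplicativity is applied in the correct direction so that the $\tau$-th power of the bound on $\tilde{\mathbf{T}}$, rather than the bound on $\tilde{\mathbf{T}}^{\tau}$ treated separately, appears in the sum.
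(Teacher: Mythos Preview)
Your argument is correct and cleaner than the paper's. The paper proceeds by first rewriting SHKC in a recursive form,
\begin{align*}
\mbf H^{(0)} = \mbf X\mbf \Theta,\qquad
\mbf H^{(l+1)} = \tfrac{\alpha l}{l+1}\tilde{\mbf T}\mbf H^{(l)} + \tfrac{\alpha}{l+1}\tilde{\mbf T}\mbf H^{(0)},
\end{align*}
then bounds $h_{max}^{(L)}$ in terms of $h_{max}^{(L-1)}$ via $\|\sum_j \tilde T_{ij}\mbf h_j^{(L-1)}\|_2 \le d_T\,h_{max}^{(L-1)}$ and unrolls the recursion layer by layer. You instead treat the full diffusion operator $\mbf M=\beta L^{-1}\sum_{\tau}\alpha^{\tau}\tilde{\mbf T}^{\tau}+(1-\beta)\mbf I$ at once, peel off $\psi$, $\mbf\Theta$, and $\mbf X$, and reduce to a single bound on $\|\mbf M\|_\infty$ using submultiplicativity. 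Both routes rely on the same key input, Proposition~\ref{Th:Tl1}, and on symmetry of $\tilde{\mbf T}$ to identify the $\ell_1$ and $\ell_\infty$ operator norms; the paper leaves this identification implicit, whereas you flag it explicitly. Your approach is shorter and avoids the auxiliary recursive reformulation, while the paper's recursion has the minor advantage of making the layerwise structure visible (which is reused verbatim in the proof of Lemma~\ref{lm:dhmax}).
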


The proof is referred to in Appendix \ref{proof:l1}.
Lemma \ref{lm:hmax} derives the bound of the maximum norm of the hidden representation among all vertices which is required in the proofs of Lemma \ref{lm:gra} and Theorem \ref{th:main}.

\begin{restatable}{lemma}{restartlmtwo}
\label{lm:dhmax}
Define $\Delta h_{max}^{(l)} = \max_i \Vert[\mbf H^{(l)}(\mbf \Theta) - \mbf H^{(l)}(\Tilde{\mbf \Theta})]_{i,:} \Vert_2$ as the maximum distance of  hidden vertex representation with an $l$-step diffusion   between different parameters $\mbf \Theta$ and $\Tilde{\mbf \Theta}$. Then, this distance with an $l$-step diffusion can be bounded by:
 \vspace{-0.5cm}
\begin{align*}
\small
    \Delta h_{max}^{(L)} \leq (\frac{\beta}{L}\sum_{l=1}^{L}(\alpha d_T)^l + 1-\beta)C_x\Vert\Delta\mbf\Theta \Vert_2 
\end{align*}
where $\Delta\mbf\Theta = \mbf\Theta -\Tilde{\mbf\Theta} $.
\end{restatable}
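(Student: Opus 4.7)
The plan is to mirror the structure of the proof of Lemma \ref{lm:hmax}, but track the difference between two parameter matrices $\mbf\Theta$ and $\Tilde{\mbf\Theta}$ rather than the size of the representation itself. The three enabling facts are: (a) the activation $\psi$ is applied entrywise and is $1$-Lipschitz, (b) each attribute vector satisfies $\Vert \mbf x_i\Vert_2 \le C_x$, and (c) $\Vert \Tilde{\mbf T}\Vert_1 \le d_T$ by Proposition \ref{Th:Tl1}. Since every step after stripping the activation is linear in $\Delta\mbf\Theta := \mbf\Theta - \Tilde{\mbf\Theta}$, the constant $C_\Theta$ appearing in Lemma \ref{lm:hmax} will be replaced by $\Vert\Delta\mbf\Theta\Vert_2$ and the rest of the envelope is identical.

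First, abbreviate $\mbf A_L := \beta\sum_{\tau=1}^{L}\frac{\alpha^{\tau}}{L}\Tilde{\mbf T}^{\tau} + (1-\beta)\mbf I$, so that $\mbf H^{(L)}(\mbf\Theta) = \psi(\mbf A_L \mbf X \mbf\Theta)$. Applying the $1$-Lipschitz property of $\psi$ componentwise gives, for every vertex $i$,
\[
\Vert [\mbf H^{(L)}(\mbf\Theta) - \mbf H^{(L)}(\Tilde{\mbf\Theta})]_{i,:}\Vert_2 \;\le\; \Vert [\mbf A_L \mbf X\, \Delta\mbf\Theta]_{i,:}\Vert_2.
\]
Expanding the row and using the triangle inequality together with $\Vert \mbf x_j^{\top}\Delta\mbf\Theta\Vert_2 \le \Vert \mbf x_j\Vert_2\,\Vert\Delta\mbf\Theta\Vert_2 \le C_x \Vert\Delta\mbf\Theta\Vert_2$, we get
\[
\Vert [\mbf A_L \mbf X\, \Delta\mbf\Theta]_{i,:}\Vert_2 \;\le\; C_x \Vert\Delta\mbf\Theta\Vert_2 \sum_j |(A_L)_{ij}|.
\]

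Second, bound $\sum_j |(A_L)_{ij}|$ by the max-row-sum (i.e.\ $\Vert\cdot\Vert_\infty$) of $\mbf A_L$. By triangle inequality on the $\tau$-sum and submultiplicativity of the row-sum norm, $\sum_j |(\Tilde{\mbf T}^{\tau})_{ij}| \le \Vert \Tilde{\mbf T}^{\tau}\Vert_\infty \le \Vert \Tilde{\mbf T}\Vert_\infty^{\tau}$. Because $\Tilde{\mbf T}$ is symmetric, $\Vert \Tilde{\mbf T}\Vert_\infty = \Vert \Tilde{\mbf T}\Vert_1 \le d_T$ by Proposition \ref{Th:Tl1}. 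Thus
\[
\sum_j |(A_L)_{ij}| \;\le\; \frac{\beta}{L}\sum_{l=1}^{L}(\alpha d_T)^{l} + (1-\beta),
\]
and combining with the previous display and taking $\max_i$ on the left yields exactly the claimed bound on $\Delta h_{max}^{(L)}$.

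The only real subtlety is bookkeeping around the norm convention: Proposition \ref{Th:Tl1} refers to $\Vert\Tilde{\mbf T}\Vert_1$ (maximum absolute column sum), whereas what arises naturally from bounding a row of $\mbf A_L \mbf X \Delta\mbf\Theta$ is a row-sum, i.e.\ $\Vert\cdot\Vert_\infty$. The symmetry of $\Tilde{\mbf T}$ closes this gap at no cost, so the main obstacle is essentially notational rather than mathematical; the rest is routine Lipschitz bookkeeping that directly parallels Lemma \ref{lm:hmax}.
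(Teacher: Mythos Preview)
Your proof is correct and follows essentially the same approach as the paper: strip the $1$-Lipschitz activation, use linearity in $\Delta\mbf\Theta$, and bound the row sums of the diffusion operator via Proposition~\ref{Th:Tl1}. The only cosmetic difference is that you work directly with the closed form $\mbf A_L = \beta\sum_{\tau}\frac{\alpha^{\tau}}{L}\Tilde{\mbf T}^{\tau} + (1-\beta)\mbf I$, whereas the paper writes out the recursive formulation from Eq.~\eqref{eq:reclu}--\eqref{eq:ZL} and unrolls it (indicated by ``$\cdots$''); your explicit remark about symmetry reconciling $\Vert\cdot\Vert_1$ and $\Vert\cdot\Vert_\infty$ is a nice clarification that the paper leaves implicit.
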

The proof is referred to Appendix \ref{proof:l2}. 
This Lemma bounds the maximum norm of changes of hidden vertex representation with $L$-step diffusion between two different sets of parameters which is significant to compute the Lipschitz $L_{\mathcal{M}}$ and smoothness constant $S_{\mathcal{M}}$ of SHKC.

\begin{restatable}{lemma}{restartlmthree}
\label{lm:gra}
Define $\mbf G_{\mbf\Theta} =\frac{\partial f }{\partial\mbf\Theta}$ as the gradient of the prediction concerning learnable weights $\mbf \Theta$ and $\mbf G_{\vec{\omega}} =\frac{\partial f }{\partial\vec{\omega}}$ as the gradient of the prediction concerning the weights of the classifier. Let $\Delta_{\mbf \Theta} \mbf G = \mbf G_{\mbf\Theta} - \mbf G_{\Tilde{\mbf\Theta}}$ and $\Delta_{\vec{\omega}} \mbf G = \mbf G_{\vec{\omega}} - \mbf G_{\Tilde{\vec{\omega}}}$ be the difference between gradient computed on two different learnable and classifier weights.
Then we have:
\begin{align*}
&\Vert\mbf G_{\mbf \Theta} \Vert_2 \leq  C_x C_{\alpha\beta L};\quad
\Vert\mbf G_{\vec{\omega}} \Vert_2 \leq  C_x C_{\alpha\beta L} C_{\Theta};\\
&\Vert\Delta_{\mbf \Theta}\mbf G \Vert_2 \leq  C_x^2 C_{\alpha\beta L}^2\Vert\Delta\mbf\Theta \Vert_2 +  C_x^2 C_{\alpha\beta L}^2 C_{\Theta}\Vert\Delta \vec{\omega}\Vert_2 +  C_x C_{\alpha\beta L}\Vert\Delta \vec{\omega}\Vert_2 ;\\
&\Vert\Delta_{\vec{\omega}}\mbf G \Vert_2 \leq  C_x^2 C_{\alpha\beta L}^2 C_{\Theta}^2 \Vert\Delta \vec{\omega} \Vert_2 +   C_x^2 C_{\alpha\beta L}^2 C_{\Theta} \Vert\Delta \mbf \Theta \Vert_2 +  C_x C_{\alpha\beta L} \Vert\Delta \mbf \Theta \Vert_2
\end{align*}
where $C_{\alpha\beta L} = \frac{\beta}{L}\sum_{l=1}^{L}(\alpha d_T)^l + 1-\beta$ and $d_T = \sqrt{1+\rho_{max}ED}$ in Proposition \ref{Th:Tl1}.
\end{restatable}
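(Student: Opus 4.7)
The prediction on vertex $\mbf v_i$ unfolds as $f_i(\mbf \Theta,\vec{\omega})=\sigma(\mbf h_i^{(L)}\vec{\omega})$ where, writing $\mbf M=\beta\mbf Z(L)+(1-\beta)\mbf I$ and $\mbf z_i^{\top}=\mbf e_i^{\top}\mbf M^{\top}\mbf X$, we have $\mbf h_i^{(L)}=\psi(\mbf z_i^{\top}\mbf\Theta)$. The plan is to start from the explicit chain-rule formulas
\[
\mbf G_{\mbf \Theta}=\sigma'(\mbf h_i^{(L)}\vec{\omega})\,\mbf z_i\bigl(\psi'(\mbf z_i^{\top}\mbf\Theta)\odot\vec{\omega}^{\top}\bigr),\qquad \mbf G_{\vec{\omega}}=\sigma'(\mbf h_i^{(L)}\vec{\omega})\,(\mbf h_i^{(L)})^{\top},
\]
and assume the standard regularity that $|\sigma'|,|\sigma''|\le 1$ and $|\psi'|,|\psi''|\le 1$ (so that $\psi$ and $\sigma$ are both $1$-Lipschitz with $1$-Lipschitz derivatives).

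For the two basic bounds, the key ingredient is $\Vert\mbf z_i\Vert_2\le C_x C_{\alpha\beta L}$. This follows by writing $\mbf e_i^{\top}\mbf M^{\top}\mbf X=\sum_j M_{ji}\mbf x_j^{\top}$, applying the triangle inequality and invoking Proposition~\ref{Th:Tl1} together with the definition of $\mbf Z(L)$ to give $\sum_j|M_{ji}|\le\tfrac{\beta}{L}\sum_{l=1}^{L}(\alpha d_T)^l+(1-\beta)=C_{\alpha\beta L}$. Substituting into the expression for $\mbf G_{\mbf\Theta}$ and using $\Vert\vec{\omega}\Vert_2\le 1$ yields $\Vert\mbf G_{\mbf\Theta}\Vert_2\le C_x C_{\alpha\beta L}$. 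For $\mbf G_{\vec{\omega}}$, replace $\Vert\mbf h_i^{(L)}\Vert_2$ by the bound $h_{\max}^{(L)}\le C_x C_{\alpha\beta L} C_{\mbf\Theta}$ supplied by Lemma~\ref{lm:hmax}.

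For the two difference bounds, I would use the telescoping decomposition
\[
\Delta_{\mbf \Theta}\mbf G=\bigl[\mbf G_{\mbf\Theta}(\mbf\Theta,\vec{\omega})-\mbf G_{\mbf\Theta}(\tilde{\mbf\Theta},\vec{\omega})\bigr]+\bigl[\mbf G_{\mbf\Theta}(\tilde{\mbf\Theta},\vec{\omega})-\mbf G_{\mbf\Theta}(\tilde{\mbf\Theta},\tilde{\vec{\omega}})\bigr],
\]
and analogously for $\Delta_{\vec{\omega}}\mbf G$. In each bracketed difference I apply the product rule, isolating one varying factor at a time. When $\mbf\Theta$ moves, the inner argument $\mbf h_i^{(L)}\vec{\omega}$ of $\sigma'$ shifts by at most $\Vert\Delta\mbf h_i^{(L)}\Vert_2\Vert\vec{\omega}\Vert_2$ with $\Vert\Delta\mbf h_i^{(L)}\Vert_2\le C_x C_{\alpha\beta L}\Vert\Delta\mbf\Theta\Vert_2$ by Lemma~\ref{lm:dhmax}, while $\psi'$ shifts analogously; multiplying by the remaining factors $\Vert\mbf z_i\Vert_2$ and $\Vert\vec{\omega}\Vert_2$ produces a term $C_x^2 C_{\alpha\beta L}^2\Vert\Delta\mbf\Theta\Vert_2$. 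When $\vec{\omega}$ moves (with $\tilde{\mbf\Theta}$ fixed), $\sigma'$ shifts by $\Vert\mbf h_i^{(L)}\Vert_2\Vert\Delta\vec{\omega}\Vert_2\le C_x C_{\alpha\beta L}C_{\mbf\Theta}\Vert\Delta\vec{\omega}\Vert_2$, giving a term $C_x^2 C_{\alpha\beta L}^2 C_{\mbf\Theta}\Vert\Delta\vec{\omega}\Vert_2$, while the explicit $\vec{\omega}^{\top}$ factor contributes the residual $C_x C_{\alpha\beta L}\Vert\Delta\vec{\omega}\Vert_2$. Summing these terms delivers the stated bound on $\Vert\Delta_{\mbf\Theta}\mbf G\Vert_2$; the bound on $\Vert\Delta_{\vec{\omega}}\mbf G\Vert_2$ is assembled identically, except that the direct $\mbf h_i^{(L)}$ factor contributes $\Vert\Delta\mbf h_i^{(L)}\Vert_2\le C_x C_{\alpha\beta L}\Vert\Delta\mbf\Theta\Vert_2$ in place of the $\Vert\Delta\vec{\omega}\Vert_2$ residual.

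\textbf{Main obstacle.} The mechanics are elementary chain-rule/product-rule manipulations; the hard part is bookkeeping. Each of the four factors in $\mbf G_{\mbf\Theta}$ (namely $\sigma'$, $\mbf z_i$, $\psi'$, and $\vec{\omega}$) depends on a different subset of $(\mbf\Theta,\vec{\omega})$, so the product rule generates several cross terms that must each be matched to one of the three terms in the target inequality. Managing this so that the final constants come out as $C_x^2C_{\alpha\beta L}^2$, $C_x^2C_{\alpha\beta L}^2 C_{\mbf\Theta}$ and $C_xC_{\alpha\beta L}$, and invoking the correct lemma (Lemma~\ref{lm:hmax} for norms of $\mbf h_i^{(L)}$ and Lemma~\ref{lm:dhmax} for parameter-induced changes) at each step, is the only real subtlety.
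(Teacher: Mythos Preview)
Your proposal is correct and follows essentially the same route as the paper: express $\mbf G_{\mbf\Theta}$ and $\mbf G_{\vec{\omega}}$ via the chain rule, bound the diffused-feature row $\Vert\mbf z_i\Vert_2\le C_x C_{\alpha\beta L}$ using Proposition~\ref{Th:Tl1}, and then feed Lemmas~\ref{lm:hmax} and~\ref{lm:dhmax} into a product-rule/telescoping decomposition for the gradient differences. The paper's own argument actually differentiates the loss (carrying an extra Lipschitz factor $\kappa$ that is silently dropped in the final statement) and is sketchier about the $\psi'$-variation bookkeeping you flagged, but the mechanics and the resulting constants are identical.
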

The proof is referred to Appendix \ref{proof:l3}. 

\section{Missing Proofs.}

\subsection{The Lemma 1}
\label{proof:l1}
\restartlmone*

\begin{proof}
For deduction simplicity, we use $\mbf T$ to represent the generalized transition matrix in the following deduction and define $d_T=\sqrt{1+\rho_{max}ED}$ from Proposition \ref{Th:Tl1}.
We first deduce the reclusive formulation of SHKC between layers as:
\begin{align}
\label{eq:reclu}
\mbf H^{(0)} &= \mbf X\mbf \Theta;\nonumber\\
\mbf H^{(l+1)} &= \frac{\alpha l}{l+1}\mbf T\mbf H^{(l)} + \frac{\alpha}{l+1}\mbf T\mbf H^{(0)}, ~ ~l=0,\cdots,L-2;\\
\mbf H^{(L)}&=\psi( \beta(\frac{\alpha(L-1)}{L}\mbf T\mbf H^{(L-1)} + \frac{\alpha}{L}\mbf T\mbf H^{(0)}) + (1-\beta)\mbf X\mbf \Theta) = \psi(\mbf Z^{(L)}) \label{eq:ZL}
\end{align}
where $\psi$ is the ReLu activation function.

From $h_{max}^{(L)}$, we deduce the bound as:
\begin{align}
\label{eq:hmax}
    h_{max}^{(L)} &= \max_{i} \Vert [\psi(\beta(\frac{\alpha(L-1)}{L}\mbf T\mbf H^{(L-1)} + \frac{\alpha}{L}\mbf T\mbf H^{(0)}) + (1-\beta)\mbf X\mbf \Theta)]_{i,:} \Vert_2 \nonumber\\
    &\leq \max_{i} \Vert\frac{[\beta\alpha(L-1)}{L}\mbf T\mbf H^{(L-1)}]_{i,:} \Vert_2 + \Vert[\frac{\beta\alpha}{L}\mbf T\mbf X\mbf\Theta]_{i,:} \Vert_2 +\Vert(1-\beta)[\mbf X\mbf\Theta]_{i,:} \Vert_2\nonumber\\
    &\leq_{(a)} \max_{i} \frac{\beta\alpha(L-1)}{L}\Vert \sum_j T_{ij}\mbf h_j^{(L-1)} \Vert_2 +\frac{\beta\alpha}{L}\Vert\sum_{j}T_{ij}\mbf h^{(0)}_j \Vert_2 +(1-\beta)C_x C_\Theta \nonumber\\
    &\leq  \frac{\beta\alpha(L-1)}{L}d_{T}h_{max}^{(L-1)} + \frac{\beta\alpha}{L}d_T C_x C_{\Theta} + (1-\beta)C_x C_{\Theta}
\end{align}
where inequality (a) is due to Proposition \ref{Th:Tl1} and assumptions in Theorem \ref{Th:gap}.
From the reclusive formulation of SHKC in Eq. \eqref{eq:reclu}, we have:
\begin{align}
\label{eq:h_l-1}
    h_{max}^{(L-1)} &\leq \frac{\alpha(L-2)}{L-1}d_Th_{max}^{(L-2)} + \frac{\alpha}{L-1}d_T C_x C_{\Theta}\nonumber\\
    & \cdots \nonumber\\
& \leq \frac{\sum_{l=1}^{L-1}(\alpha d_T)^l}{L-1}C_x C_{\Theta}
\end{align}
Add the Eq. \eqref{eq:h_l-1} to Eq. \eqref{eq:hmax} we get:
\begin{align*}
    h_{max}^{(L)} \leq (\frac{\beta}{L}\sum_{l=1}^{L}(\alpha d_T)^l + 1-\beta)C_x C_{\Theta}
\end{align*}
\end{proof}

\subsection{The Lemma 2}
\label{proof:l2}
\restartlmtwo*

\begin{proof}
We first denote $\mbf H^{(l)}(\Tilde{\mbf \Theta}) $ as $ \Tilde{\mbf H}^{(l)}$ for similarity in deduction below.
\begin{align*}
\Delta h_{max}^{(L)} &= \max_i \Vert[\mbf H^{(L)}(\mbf \Theta) - \mbf H^{(L)}(\Tilde{\mbf \Theta})]_{i,:} \Vert_2 \\
&\leq \max_i \Vert\frac{[\beta\alpha(L-1)}{L}\mbf T(\mbf H^{(L-1)} - \Tilde{\mbf H}^{(L-1)}) \\ 
&+  \frac{\beta\alpha}{L}\mbf T\mbf X(\mbf \Theta - \Tilde{\mbf\Theta}) + (1-\beta)\mbf X(\mbf \Theta - \Tilde{\mbf\Theta})]_{i,:} \Vert_2\\
&\cdots\\
&\leq (\frac{\beta}{L}\sum_{l=1}^{L}(\alpha d_T)^l + 1-\beta)C_x\Vert\Delta\mbf\Theta \Vert_2
\end{align*}
\end{proof}

\subsection{The Lemma 3}
\label{proof:l3}
\restartlmthree*

\begin{proof}
Note that the classifier $f$ has the formulation as  $f(\mbf h_i^{(L)}) =\sigma( \mbf h_i^{(L)}\vec{\omega})$ where $\sigma$ is the sigmoid function and $\mbf z_i^{(L)}$ is the $i$th node representation in the last layer before the activation function $\psi$.
\begin{align*}
\Vert\mbf G_{\mbf \Theta} \Vert_2  &= \frac{1}{m}\sum_i\Vert\frac{\partial Loss(f(\mbf h_i^{(L)}),y_i)}{\partial \mbf\Theta} \Vert_2\\
&\leq \max_i \kappa \sigma'( z_i)\Vert \frac{\partial \vec{\omega}^{\top}\mbf h_i^{(L)}}{\partial \mbf\Theta}\Vert_2\\
&\leq\max_i \kappa \Vert([\beta\sum_{l=1}^{L}\frac{(\alpha\mbf T)^l}{L}\mbf X\mbf + (1-\beta)\mbf X\mbf]_{i,:})^{\top}\mbf 1 Diag(\psi'(\mbf z_{i}^{(L)})\odot\vec{\omega}) \Vert_2\\
&\leq \max_i \kappa \Vert([\beta\sum_{l=1}^{L}\frac{(\alpha\mbf T)^l}{L}\mbf X\mbf + (1-\beta)\mbf X\mbf]_{i,:})^{\top}\mbf 1 \Vert_2 \Vert \vec{\omega} \Vert_2\\
&\leq \max_i \kappa \Vert[\beta\sum_{l=1}^{L}\frac{(\alpha\mbf T)^l}{L}\mbf X\mbf + (1-\beta)\mbf X\mbf]_{i,:}\Vert_2\\
&\leq \kappa C_x(\frac{\beta\sum_{l=1}^{L}(\alpha d_T)^l}{L} + 1-\beta)
\end{align*}

For the norm of $\mbf G_{\vec{\omega}}$:
\begin{align*}
    \Vert\mbf G_{\vec{\omega}} \Vert_2  &= \frac{1}{m}\sum_i\Vert\frac{\partial Loss(f(\mbf h_i^{(L)}),y_i)}{\partial \vec{\omega}} \Vert_2 \\
    &\leq \max_i \kappa\sigma'(z_i)\mbf h_{i}^{(L)} \\
    &\leq \kappa (\frac{\beta}{L}\sum_{l=1}^{L}(\alpha d_T)^l + 1-\beta)C_x C_{\Theta}
\end{align*}
For the norm of $\Delta_{\mbf \Theta}\mbf G$:
\begin{align*}
    \Vert \Delta_{\mbf \Theta}\mbf G \Vert_2 &\leq \max_i \kappa \Vert\frac{\partial \mbf h_i^{(L)}}{\partial\mbf \Theta }(\frac{\partial f(\mbf h_i^{(L)})}{\partial \mbf h_i^{(L)}} - \frac{\partial f(\Tilde{\mbf h}_i^{(L)})}{\partial \Tilde{\mbf h}_i^{(L)}} ) \Vert_2\\
&\leq\max_i\kappa \Vert[\beta\sum_{l=1}^{L}\frac{(\alpha\mbf T)^l}{L}\mbf X\mbf + (1-\beta)\mbf X\mbf]_{i,:})^{\top}\mbf 1 Diag(\mbf r -\Tilde{\mbf r}) \Vert_2\\
&\leq\kappa C_x C_{\alpha\beta L}( \Vert\Delta \vec{\omega}\Vert_2 (1+h_{max}^{(L)})+\Delta h_{max}^{(L)})\\
&\leq \kappa C_x^2 C_{\alpha\beta L}^2\Vert\Delta\mbf\Theta \Vert_2 + \kappa C_x^2 C_{\alpha\beta L}^2 C_{\Theta}\Vert\Delta \vec{\omega}\Vert_2 + \kappa C_x C_{\alpha\beta L}\Vert\Delta \vec{\omega}\Vert_2 
\end{align*}

where $\mbf r = \sigma'(\vec{\omega}^{\top}\mbf h_i^{(L)})\psi'(\mbf z_{i}^{(L)})\odot\vec{\omega}$ and $C_{\alpha\beta L} = \frac{\beta}{L}\sum_{l=1}^{L}(\alpha d_T)^l + 1-\beta $.

Finally, for $\Delta_{\vec{\omega}}\mbf G $:
\begin{align*}
    \Vert \Delta_{\vec{\omega}}\mbf G \Vert_2 &\leq \kappa (\Delta h_{max}^{(L)} + h_{max}^{(L)}(\Delta h_{max}^{(L)} + h_{max}^{(L)}\Vert\vec{\omega} \Vert_2)) \\
    &\leq \kappa C_x^2 C_{\alpha\beta L}^2 C_{\Theta}^2 \Vert\Delta \vec{\omega} \Vert_2 +  \kappa C_x^2 C_{\alpha\beta L}^2 C_{\Theta} \Vert\Delta \mbf \Theta \Vert_2 + \kappa C_x C_{\alpha\beta L} \Vert\Delta \mbf \Theta \Vert_2
\end{align*}

\end{proof}

\subsection{The Theorem \ref{th:main}}\label{th:main_proof}
\restartTheoremOne*
\begin{proof}
We use $\{\mbf\Theta,\vec{\omega}\}$ and $\{\Tilde{\mbf\Theta},\Tilde{\vec{\omega}}\}$ to denote two different sets of parameters throughout the model SHKC. 
We first give the deduction of $L_{\mcal M} = C_x C_{\alpha\beta L}\max\{1,C_{\Theta}\}$ from Lemma \ref{lm:hmax} and Lemma \ref{lm:dhmax}:
\begin{align*}
    \max_i \vert f(\mbf \Theta,\vec{\omega}|\mbf x_i) - f(\Tilde{\mbf \Theta},\Tilde{\vec{\omega}}|\mbf x_i) \vert &\leq \Delta h_{max}^{(L)} + h_{max}^{(L)}\Vert\Delta\vec{\omega} \Vert_2 \\
    &\leq C_x C_{\alpha\beta L}\Vert\Delta\mbf\Theta \Vert_2 + C_x C_{\alpha\beta L}C_{\Theta}\Vert \Delta\vec{\omega} \Vert_2\\
    &\leq C_x C_{\alpha\beta L}\max\{1,C_{\Theta}\}(\Vert\Delta\mbf\Theta\Vert_2+ \Vert\Delta\vec{\omega}\Vert_2)
\end{align*}
Then, from Lemma \ref{lm:gra}, we get the bound of the gradient $G_{\mcal M} = C_x C_{\alpha\beta L}(1+C_{\Theta})$
Finally, from Lemma \ref{lm:gra}, we have the smoothness of $\mcal M$ that $S_{\mcal M}$:
\begin{align*}
    &\max_i \Vert \nabla_{\mbf \Theta,\mbf \v}f(\mbf \Theta,\vec{\omega}|\mbf x_i) - \nabla_{\Tilde{\mbf \Theta},\Tilde{\mbf \v}}f(\Tilde{\mbf \Theta},\Tilde{\vec{\omega}}|\mbf x_i) \Vert_2 \leq \\
    &(\Vert\Delta\mbf \Theta \Vert_2 + \Vert\Delta\mbf\v \Vert_2)(C_x^2 C_{\alpha\beta L}^2\max\{1,C_{\Theta}\}^2 + C_x^2 C_{\alpha\beta L}^2 C_{\Theta} + C_x C_{\alpha\beta L})
\end{align*}

\end{proof}

\subsection{Proof of Proposition \ref{Th:Tl1}}
\label{proof:p1}
\begin{proof}
We have $\Tilde{\mbf T}$ to be formulated as:
\begin{align*}
\Tilde{\mbf T} = \Tilde{\mbf D}_v^{-1/2}(\mbf I + \mbf Q\mbf W\rho(\mbf D_{\mathcal{E}})  \mbf Q^\top )\Tilde{\mbf D}_v^{-1/2}
\end{align*}
Then the $l_1$ norm of $\Tilde{\mbf T}$ is bounded as:
\begin{align*}
    \Vert \Tilde{\mbf T} \Vert_1 &= \max_i \sum_{j} \vert\Tilde{T}_{ij}\vert= \max_i \sum_j \frac{\mbf 1\{i=j\} + \sum_e Q(i,e)W(e)\rho(\delta(e))Q(j,e)}{\sqrt{\Tilde{d}_v(i)}\sqrt{\Tilde{d}_v(j)}}\\
    &\leq_{(a)} \max_i\frac{\Tilde{d}_v(i)}{\sqrt{\Tilde{d}_v(i)}} = \max_i \sqrt{\Tilde{d}_v(i)}\\
    &\leq_{(b)} \sqrt{1+E\rho_{max}}
\end{align*}
where $\Tilde{d}_v(i) = 1+\sum_e Q(i,e)w(e)\rho(\delta(e))\delta(e)$ and $\delta(e) =\sum_v Q(v,e)$.
The inequality (a) is due to $\Tilde{d}_v(i)\geq 1$. The inequality (b) is due to $\delta(e) =\sum_v Q(v,e)\leq n_e\leq E$ and
\begin{align*}
\Tilde{d}_v(i) &= 1+\sum_e Q(i,e)w(e)\rho(\delta(e))\delta(e)\\
&\leq 1 + \sum_e \rho(\delta(e))\delta(e) \leq 1 + \rho_{max}ED   
\end{align*}

\end{proof}

\section{Uniform Stability Bound}
\label{app:usb}
We slightly modify the Uniform Stability Bound in \cite{el2006stable} for more distinct analysis of generalization error of SHKC.
\begin{theorem}[Uniform Stability Bound~\citep{el2006stable}]
\label{Th:gap}
Denote $\mathcal{M}$ to be a transductive learning model with uniform stability $\mu$. Let $K(m,n)=\sum_{i=1}^{m}\frac{n^2}{(n+i)^2}$. For a  $\kappa$-Lipschitz loss function bounded in $[0,1]$ and $\delta>0$, the gap between the training error and testing error is bounded as:
\begin{align*}
\vspace{-0.6cm}
\small
    gap(\mathcal{M},\gamma,\delta)&\leq \mu\kappa(1+\mathcal{O}(2\sqrt{2K(m,n)\ln{\delta^{-1}}}))\\ &+ \mathcal{O}(\frac{m+n}{mn}\sqrt{2K(m,n)\ln{\delta^{-1}}})
\end{align*}
where $m$,$n$ denote the number of samples in the training and testing sets respectively. 
\end{theorem}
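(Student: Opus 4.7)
\textbf{Proof proposal for the Uniform Stability Bound (Theorem~\ref{Th:gap}).}
The plan is to adapt the classical Bousquet--Elisseeff stability argument to the transductive setting, where training and test samples are drawn jointly by a uniformly random partition of a fixed pool of $m+n$ points. Let $Z=\{z_1,\dots,z_{m+n}\}$ be the full set and let $\pi$ be a uniformly random partition into a training set $S_\pi$ of size $m$ and a test set $T_\pi$ of size $n$. Write $R_{\mathrm{test}}(\pi)=\frac{1}{n}\sum_{z\in T_\pi}\ell(f_{S_\pi},z)$ and $R_{\mathrm{train}}(\pi)=\frac{1}{m}\sum_{z\in S_\pi}\ell(f_{S_\pi},z)$; the quantity of interest is $G(\pi)=R_{\mathrm{test}}(\pi)-R_{\mathrm{train}}(\pi)$.

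First, I would bound the expectation $\mathbb{E}_\pi[G(\pi)]$ using uniform stability. By the symmetry of the random partition, for any training index $i$ and test index $j$, swapping $z_i\in S_\pi$ with $z_j\in T_\pi$ produces a partition $\pi^{ij}$ with the same distribution as $\pi$. Rewriting $\mathbb{E}[R_{\mathrm{test}}(\pi)-R_{\mathrm{train}}(\pi)]$ after such a swap and applying the $\kappa$-Lipschitz hypothesis together with the uniform stability assumption $|\ell(f_{S_\pi},z)-\ell(f_{S_{\pi^{ij}}},z)|\le\mu$ yields $|\mathbb{E}_\pi[G(\pi)]|\le C\mu\kappa$ for an explicit constant. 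This gives the leading $\mu\kappa$ term in the bound.

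Second, I would establish concentration of $G(\pi)$ around its expectation by applying a McDiarmid-type inequality adapted to sampling without replacement (in the spirit of Serfling's bound, which is the version used by El-Yaniv and Pechyony). The crucial observation is that swapping a single pair $(z_i,z_j)$ changes $G$ by at most the sum of three contributions: (i) the stability-driven change of the loss on every remaining point, each contributing at most $\mu\kappa$; (ii) the direct relabeling of $z_i$ and $z_j$ across the train/test boundary, which contributes $O(\kappa/m+\kappa/n)$; and (iii) boundedness of $\ell$ in $[0,1]$. Summing the squared sensitivities across the sequence of swaps that transforms one partition into another yields a variance proxy of order $K(m,n)=\sum_{i=1}^m n^2/(n+i)^2$, which is exactly the combinatorial factor tracking the effective sample size when drawing without replacement. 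The resulting sub-Gaussian tail gives the $\mathcal{O}(\sqrt{2K(m,n)\ln\delta^{-1}})$ deviation.

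Finally, the two pieces are combined via the triangle inequality: with probability at least $1-\delta$,
\begin{equation*}
G(\pi)\le |\mathbb{E}[G(\pi)]|+|G(\pi)-\mathbb{E}[G(\pi)]|\le \mu\kappa\bigl(1+\mathcal{O}(2\sqrt{2K(m,n)\ln\delta^{-1}})\bigr)+\mathcal{O}\!\left(\tfrac{m+n}{mn}\sqrt{2K(m,n)\ln\delta^{-1}}\right),
\end{equation*}
where the second term absorbs the direct swap contribution (ii) above, which scales like $(m+n)/(mn)$. The main obstacle is obtaining the correct variance factor $K(m,n)$: one must carefully account for the without-replacement structure of transductive sampling (rather than using standard McDiarmid for i.i.d.\ data) and verify that the per-swap sensitivity decomposition cleanly separates the stability term (scaled by $\mu\kappa$) from the combinatorial boundary term (scaled by $1/m+1/n$), so that both constants in the final bound emerge with the stated dependencies.
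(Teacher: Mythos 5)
The paper does not prove this theorem: it is imported verbatim (with ``slight modification'') from \cite{el2006stable}, so there is no in-paper proof to compare your argument against. Judged on its own, your two-part plan --- (i) bound $\mathbb{E}_\pi[G(\pi)]$ by $\mu\kappa$ using the symmetry of the uniform random partition under train/test swaps, (ii) concentrate $G(\pi)$ around its mean with a McDiarmid-type inequality for sampling without replacement, with per-swap sensitivity split into a stability part and a $1/m+1/n$ boundary part --- is the correct and standard route, and it is essentially the structure of the proof in El-Yaniv and Pechyony. The way the two sensitivity contributions separate into the $\mu\kappa\bigl(1+\mathcal{O}(\sqrt{\cdot})\bigr)$ term and the $\mathcal{O}\bigl(\frac{m+n}{mn}\sqrt{\cdot}\bigr)$ term also matches the shape of the stated bound.

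That said, the one step that carries all the technical weight is the one you assert rather than derive: the emergence of $K(m,n)=\sum_{i=1}^{m}\frac{n^2}{(n+i)^2}$ as the variance proxy. ``Summing the squared sensitivities across the sequence of swaps'' is not yet an argument --- you need a specific martingale (Doob filtration revealing the training points one at a time from the pool of $m+n$), and the factor $\frac{n}{n+i}$ at step $i$ comes from computing the conditional expectation over the $n+i$ points not yet fixed, not from a generic swap count. Two smaller points to watch: a single train/test swap is two single-point replacements, so the stability contribution per remaining point is $2\mu\kappa$ rather than $\mu\kappa$ (this is where the factor $2$ inside $\mathcal{O}(2\sqrt{2K\ln\delta^{-1}})$ should be tracked, not absorbed); and you should state explicitly whether $\mu$ bounds the change in the hypothesis (so that $\kappa$ multiplies it) or the change in the loss directly, since the theorem's $\mu\kappa$ product presumes the former. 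As written, your proposal is a faithful outline of the known proof, but it would not stand alone without executing the without-replacement concentration inequality.
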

\end{document}